\documentclass[11pt]{article}
\usepackage[a4paper, margin=1in]{geometry}
\usepackage{hyperref}
\usepackage{url}
\usepackage{amsmath,amssymb,amsthm}
\usepackage{algorithm,algorithmic}
\usepackage{dsfont}
\usepackage{multirow}
\usepackage{authblk}
%\externaldocument{complexity_supp}
% use Times
%\usepackage{times}
\newtheorem{theorem}{Theorem}
\newtheorem{lemma}{Lemma}
\newtheorem{corollary}{Corollary}
\newtheorem{definition}{Definition}

% The \icmltitle you define below is probably too long as a header.
% Therefore, a short form for the running title is supplied here:
\title{On Fundamental Limits of Robust Learning}
\author{Jiashi Feng}
\affil{Department of ECE, National University of Singapore}
\date{}
\begin{document}

\maketitle

\begin{abstract}
We consider the problems of robust PAC learning from distributed and streaming data, which may contain malicious errors and outliers, and analyze their fundamental complexity questions. 	In particular, we establish lower bounds on the communication complexity for distributed robust learning performed on multiple machines, and on the space complexity for robust learning from streaming data on a single machine. These results demonstrate that gaining robustness of learning algorithms is  usually at the expense of increased complexities. As far as we know, this work gives the first complexity results for distributed and online robust PAC learning.
\end{abstract}

\section{Introduction}
The last decade has witnessed a tremendous growth in the amount of data involved in machine learning tasks. In many cases, data volume has outgrown the capacity of memory of a single machine and it is increasingly common that learning tasks are performed in a \emph{distributed}   fashion on many machines \cite{balcan2012distributed,daume2012efficient,zhang2013information} or an  \emph{online} fashion on a single machine \cite{muthukrishnan2005data}.

This gives rise to the following fundamental, yet rarely investigated, complexity questions for distributed and online learning algorithms. Considering a concrete example (our analysis is beyond this case): suppose one has the positive and negative examples stored on two separate machines, then how much communication is necessary to learn a good hypothesis to specified error rate? and how about for the case where more machines are involved? In addition, if the positive and negative examples arrives in a single machine with arbitrary order, how much memory must be used to store the necessary information to learn a good hypothesis? if multiple passes of the data are allowed, how does the memory cost scale then?

%The above problems for distributed learning are about the \emph{communication overhead} between different machines, each of which processes a disjoint subset of the data \cite{garg2014communication,balcan2012distributed,daume2012efficient}. On the other hand, for the online learning scenario, the fundamental question of interest is how large \emph{memory resource} is necessary for the learning purpose \cite{muthukrishnan2005data}.

In this work, we investigate the above communication and space complexities of distributed and online probably approximately correct (PAC) learning \cite{valiant1984theory}, {\em in the presence of malicious errors (outliers) in the training examples}. PAC learning with malicious errors  was first proposed in \cite{valiant1985learning} and then generalized in \cite{kearns1993learning}. In this learning setting, there is a fixed probability $ \lambda $ of an error occurring independently on each request for a training example. This error may be of an arbitrary nature~--~in particular, it may be chosen by an adversary with unbounded computational resources, and exact knowledge of the target representation, the target distributions, and the current internal state of the learning algorithm. Though various robust learning algorithms have been proposed in literature to tackle the outliers in the data, the fundamental limits of these algorithms as well as of the problem itself, in terms of their communication and space complexities, are rarely investigated.

%In this work, we try to provide preliminary answers to the following fundamental questions about the limitations of robust PAC learning for big data: at least how much communication is necessary in the distributed learning setting and how much memory is necessary for the online learning? and what is the expense of gaining robustness for the learning algorithms?

\subsection{Our Contributions}
We analyze and provide general lower bounds on the amount of communication and memory cost needed to learn a good hypothesis, from either distributed or streaming data, in the presence of malicious errors and outliers.

Our results reveal the connection between the communication complexity and the VC-dimension  $ d $ of the hypothesis to learn, as well as the outlier probability $ \lambda $. We first derive a lower bound of $ \Omega(\frac{1-\epsilon}{1-\lambda}d )$ with $ \epsilon $ the specified error rate, for the simplest distributed learning protocol~--~only two machines are involved and only a single message is allowed to communicate. We then extend the  lower bound analysis to more general distributed learning protocols, allowing $ k $-machine and $ t $-round communications, and obtain a general lower bound of $ \Omega(\frac{1-\epsilon}{1-\lambda}\frac{dk}{t^2} )$. All these lower bounds  present an additional factor proportional to $ \lambda $, which explains the extra communication expense brought by the outliers in the learning process.

We also analyze the $r$-pass online robust PAC learning algorithms. We demonstrate their space complexity lower bounds can be conveniently deduced from the communication complexity results for distributed learning algorithms and hence obtain a space complexity lower bound of $\Omega(\frac{1-\epsilon}{1-\lambda}\frac{d}{r} )$.

Finally, we provide a communication efficient distributed robust PAC learning algorithm along with its  performance guarantee. Experimental studies on synthetic and real data demonstrate the proposed algorithm consumes significant less communication to achieve comparable accuracy with a naive distributed learning algorithm, which aggregates all the examples to a single machine.

\subsection{Related Works}
%{\sc Tools}: Streaming methods, communication complexity \cite{bar2002information, bar2002statistics}, discrepancy method \cite{chazelle2000discrepancy} and information theory.
Since its introduction in \cite{yao1979some}, communication complexity \cite{communicationKush} has proven to be a powerful technique for establishing lower bounds in a variety of settings, including distributed \cite{zhang2013information,duchi2014information,garg2014communication} and streaming data models \cite{muthukrishnan2005data}.

Several recent works on communication complexity for distributed PAC learning include \cite{balcan2012distributed,daume2012efficient,daume2012protocols} and there are also some works about distributed statistical estimation \cite{zhang2013information,duchi2014information}. In particular, Duchi \emph{et al.} \cite{duchi2013distance} demonstrated an analysis tool based on the information Fano inequalities. Garg \emph{et al.} \cite{garg2014communication} investigated  how the communication cost scales in the parameter dimensionality for distributed statistical estimation. Kremer \emph{et al.} \cite{kremer1999randomized} provide a connection between communication complexity and VC-dimension. However, those works generally focus on the case no outliers or malicious errors are presented in the learning process.

On the other hand, studies on streaming algorithms focus on the scenario where input arrives very rapidly and there is limited memory to store the input \cite{muthukrishnan2005data}, and investigate the necessary space cost under the most adversarial data orders for  various data mining problems, include quantile query \cite{zhang2013information}, frequent elements query \cite{metwally2005efficient,alon1996space}, regression \cite{wong2005fast} and model monitoring \cite{korn2003checks}. Those algorithms  do not consider the case with malicious errors either.

\section{Problem Setup and Preliminaries}
% \subsection{Problem Settings}
\paragraph{Robust PAC Learning} In a classical distribution-free learning setting \cite{valiant1984theory}, the probably approximately correct (PAC) learning model typically assumes the oracles  {\sc Pos}  and {\sc Neg}  always \emph{faithfully} return positive and negative examples from a sample domain $ X $, drawn according to target distributions respectively. However, in many real environments, it is possible that an erroneous or even adversarial example is given to the learning algorithm. We consider in this work such a generalized  PAC learning problem,  termed as \emph{robust PAC learning}, where malicious outliers are possibly presented in training examples \cite{kearns1993learning}.
%
%Formally, let $ \mathcal{C} \subseteq \{0,1\}^*$\footnote{Here we use the superscript $ * $ to denote the unknown dimension.} be a \emph{representation class} over sample domain (a.k.a.\ instance space)  $ X $, and let $ c \in \mathcal{C} $ be a target representation with  distributions $ D^{pos} $ and $ D^{neg} $. For instance, $ \mathcal{C} $ can be categories of objects to classify and $ c $ can be a specific category label. We define $ pos(c) $ and $ neg(c) $ as the positive and negative examples of $ c $ respectively. 
%{\color{red}[Do you want to explain what is a representation class and what is a representation, or you think it is clear?]}

In particular, we consider following erroneous oracles in robust PAC learning. For a  error probability $ 0 \leq \lambda <1/2 $, we have access to following two noisy oracles -- $ \text{\sc Pos}^\lambda $ and $ \text{\sc Neg}^\lambda $ -- which behave as follows: when oracle $ \text{\sc Pos}^\lambda $  is called, with probability $ 1-\lambda $, it returns positive examples as in the error free model. But with probability $ \lambda $, an example about which absolutely no assumptions can be made is returned. In particular, this example may be dynamically and maliciously chosen by an adversary. The oracle  $ \text{\sc Neg}^\lambda $ behaves similarly.

Formally, the task of a robust PAC learning algorithm is to find a hypothesis $ h \in \mathcal{H} $,  with access only to the above noisy oracles $ \text{\sc Pos}^\lambda $ and $ \text{\sc Neg}^\lambda $ and with predefined parameters $ \epsilon, \delta $ and $ \lambda $, such that: for any input values $ 0<\epsilon, \delta <1 $ and $ \lambda < 1/2 $, the output hypothesis $ h $   has bounded errors $ e(h) < \epsilon $, with probability at least $ 1-\delta $. Throughout the paper, we assume a constant failure probability  $ \delta $ for the simplicity of analysis. Note that we always have $ \lambda < \epsilon $ as proved in \cite{kearns1993learning}.

\paragraph{Distributed and Online PAC Learning} As mentioned above, in this work we consider a realistic case where the entire data set is too large to store on a single machine. Thus  following two learning frameworks become the choices to deal with the issues brought by the large scale of the dataset, which include:
\begin{enumerate}
	\item \emph{Distributed robust learning}. Data are distributed on $ k $ machines. The problem  of interest is how to communicate among the $ k $ machines, and especially how much  communication is necessary to produce a low-error hypothesis. Namely, we are concerned about the \emph{communication complexity} of distributed robust learning.	
	\item \emph{Online robust learning}. Data are revealed sequentially (in multiple passes) to a single machine. The problem we want to answer  is how much memory cost is necessary to maintain the intermediate information, in order to obtain the low-error hypothesis, \emph{i.e.}, the \emph{space complexity} of online robust learning.
\end{enumerate}
We regard communication (the number of bits communicated) and memory as a limited resource, and therefore we not only want to optimize the hypothesis error $ e(h) $ but also the resource cost of the whole learning procedure. We aim at providing a fundamental understanding on the necessary communication and space cost in order to learn a hypothesis $ h $ with a specified error $ \epsilon $, in presence of a constant outlier fraction $ \lambda $.

%There are also some hybrid  frameworks, such as distributed online robust learning, where after distributing data onto multiple machines, each machine performs online learning in parallel. In this work, we mainly focus on the above two basic learning protocols and the corresponding results can be generalized to other cases straightforwardly.

\paragraph{Communication Protocols and Complexity} 
In the above distributed robust learning setting, several different  communication protocols among the machines can be employed. For instance, in a \emph{$1$-round $2$-machine} public-randomness protocol between two machines $ M_1 $ and  $ M_2 $, denoted as $ P^{M_1\rightarrow M_2} $, $ M_1 $ is only allowed to send a single message to $ M_2 $ which must then be able to output the hypothesis $ h $. Similarly, when multiple-round communications are allowed between $ M_1 $ and $ M_2 $, \emph{i.e.}, multiple messages can be communicated among the two machines, we are using a  \emph{$t$-round $2$-machine} protocol, which is denoted as $ P^{M_1 {\leftrightarrow} M_2} $. Extending the above  protocols to the cases involving $k$ machines gives the \emph{$1$-round $k$-machine} $ P^{M_1  \rightarrow \ldots \rightarrow M_k} $ and the \emph{$t$-round $k$-machine} $ P^{M_1 {\leftrightarrow}  \ldots {\leftrightarrow} M_k} $ protocols respectively.

The communication complexity of a  protocol is the minimal number of bits needed to exchange between machines for learning the hypothesis $ h $.
%\begin{definition}[Communication Complexity \cite{yao1979some}]
%	\emph{Communication complexity} is the minimal number of bits needed to exchange between machines for the problem in Definition \ref{def:two-machine-prob}, for the \emph{worst case} choice of $  X_1 $ and $  X_2 $.
%\end{definition}
Assume a specified error rate $  0 < \epsilon < 1 $ of the learned hypothesis $ h $, we specifically consider following  communication  complexities $ R_\epsilon^{M_1\rightarrow M_2}(h) $ -- the communication complexity of randomized $1$-round  $2$-machine  protocol, and $ R_\epsilon^{M_1  {\leftrightarrow} \ldots {\leftrightarrow} M_k}(h) $ -- the complexity of the  $ P^{M_1 {\leftrightarrow}  \ldots {\leftrightarrow} M_k} $  protocol.
%\begin{enumerate}
%	\item $ R_\epsilon^{M_1\rightarrow M_2, \text{pub}}(h) $: the communication complexity of randomized 1-round communication protocol. When clear from context, we also denote it as $ R_\epsilon^{M_1 \rightarrow M_2}(h) $.
%	\item  For a probability distribution $ \mu $ on $ X \times Y $, $ D_\epsilon^{A\rightarrow B, \mu}(h) $ denotes the {1-round $ \mu $-distribution complexity} of $ h $ with error probability $ \epsilon $ over all deterministic protocol, \emph{i.e.}, $ D_\epsilon^{A\rightarrow B,\mu}(h) $ is the communication cost  for achieving $ \mathrm{Pr}_\mu \left[h(x,y)\neq c(x,y)\right] < \epsilon$.
%\end{enumerate}

We start the communication complexity analysis  from the  case when there are two machines, and the $1$-round protocol  $ P^{M_1\rightarrow M_2} $ is employed. After demonstrating the lower bound on the complexity  $ R_\epsilon^{M_1\rightarrow M_2}(h) $, we proceed to analyze the complexity  $ R_\epsilon^{M_1  {\leftrightarrow} \ldots {\leftrightarrow} M_k}(h) $. The complexity lower bounds for other protocols, such as $ 2 $-machine $t$-round and $ k $-machine $ 1 $-round protocols, are provided in the appendix. 
%The problem is formally described as follows.

%\begin{definition}[Two-machine Communication Problem]
%	\label{def:two-machine-prob}
%	Consider there are two machines, which independently receive noisy samples from $ \text{\sc Pos}^\lambda $ and $ \text{\sc Neg}^\lambda $, denoted as $ X_1 $ and $ X_2 $, while not aware of the inputs to the other machine. To learn a  hypothesis $ h $ with a specified error rate $ \epsilon $ w.r.t. $ X_1 \cup X_2 $, these two machines must communicate between themselves, and at the end of communication, a designated machine announces the learned hypothesis $ h $.
%\end{definition}

\section{Main Results on Distributed Robust PAC Learning}
In this section we analyze the lower bounds on the communication cost for distributed robust PAC learning. We then extend the results to an online robust PAC learning setting in Section~\ref{sec.online}, through a one-direction equivalent lemma between  distributed and online learning, in terms of communication complexity and space complexity.

\subsection{Communication Models }

%We have $ k $ machines (\emph{i.e.} machines) and an instance space $ X $. For each machine $ i \in \{1,\ldots,k\} $ there is a distribution $ D_i $ over $ X $ that machine $ i $ can sample from, via the noisy oracles $ \text{\sc Pos}^\lambda $ and $ \text{\sc Neg}^\lambda $. The goal is to find a hypothesis $ h $ which approximates the target representation $ c $ well on the joint mixture $ D(x)=\frac{1}{k}\sum_{i=1}^k D_i(x) $.
%\begin{remark}
%In this work, we focus on the realizable case~--~we are given a hypothesis class $ \mathcal{H} $ such that $ c \in \mathcal{H} $.
%% in the agnostic case, our goal is to perform nearly as well as the best $ h^\prime \in \mathcal{H} $.
%\end{remark}
%
%In order to achieve the goal of approximating $ c$ well with respect to $ D $, machines can communicate with each other, for example by sending examples or learned hypotheses. At the end of the learning process, each machine should have a hypothesis of low error over $ D $.
%In the \emph{center} version of the model there is also a center, with initially no data of its own, mediating all the interactions. In this case the goal is for the center to obtain a low error hypothesis $ h $. In the \emph{no-center} version, the machines simply communicate directly.

For distributed learning, popular communication models  include:
(1) \emph{Blackboard model}:  any message sent by a machine is written on a blackboard visible to all machines; (2) \emph{Message passing model}:  a machine $ p_i $ sending a message specifies another machine $ p_j $ that will receive this message; (3) \emph{Coordinator model}:  there is an additional machine called the {\em coordinator}, who receives no input. Machines can only communicate with the coordinator, and not with each other directly.
%\begin{enumerate}
%	\item \emph{Blackboard model}:  any message sent by a machine is written on a blackboard visible to all machines;
%	\item  \emph{Message passing model}:  a machine $ p_i $ sending a message specifies another machine $ p_j $ that will receive this message;
%	\item  \emph{Coordinator model}:  there is an additional $ (k+1) $-st machine called the {\em coordinator}, who receives no input. Parities can only communicate with the coordinator, and not with each other directly.
%\end{enumerate}

We will focus on the message-passing model and the coordinator model, considering the blackboard model may introduce extra communication overload and is not so practical. Note that the coordinator model is almost equivalent to the message-passing model, up to a  multiplicative factor of $2$, since instead of machine $ i $ sending the message  to machine $ j $, machine $ i $ can transmit message  to the coordinator, and the coordinator forwards it to machine $ j $. Therefore, in the following sections, we mainly provide the complexity results for the message passing model, which can then be used to deduce the complexities of the coordinate model easily.

\subsection{Lower Bound for $ 1 $-Round $ 2 $-Party Communication}
From now on, we derive the main  results of this work~--~the communication complexity lower bounds for distributed robust PAC learning: how much information communication is needed to learn a hypothesis $h$ with specified error rate~$\epsilon$. Since we are mainly concerned about how communication complexity scales with $ \epsilon $, outlier fraction $\lambda$ and VC-dimension of the hypothesis $ d $, in developing the results, we assume the failure probability  $ \delta $ is fixed as a constant, such as $ 0.1 $.

We  start with one simplest distributed learning setting: only two machines are involved and only a single message communication between them is allowed. We first construct a connection between the communication complexity and VC-dimension \cite{vapnik1971uniform} of the hypothesis to learn, for this $ 1 $-round $ 2 $-machine communication protocol. 

The result is basically obtained by following Yao's principle \cite{yao1977probabilistic} (Theorem \ref{theo:yao} in the appendix) to investigate the communication complexity for a ``difficult'' distribution of the examples over the two machines, which then provides a lower communication complexity bound for  general example distributions over the machines. The lower bound on the communication complexity for the specific ``difficult'' sample distribution is constructed by limiting the information capacity (depending on the VC-dimension $ d $) of the communication channel between the two machines, and we show that with such limited capacity, the inevitable encoding error will result in a lowered bounded error rate $ \epsilon $ of the learned hypothesis $h$. Based on the above technique, we obtain the lower bound on the complexity communication of the protocol $ P^{M_1\rightarrow M_2} $. More details on the  tools and proofs for the following claim are provided  in the appendix.

%\begin{theorem}[Theorem 5 in \cite{bar2002information}]
%	For every function $ f: \mathcal{X} \rightarrow \{0,1\} $, every $ \ell \geq \mathrm{VCD}(f_{\mathcal{X}_1}) $, every disjoint input partition $ I_1,I_2 $, and every $ \delta >0 $,
%	\begin{equation*}
%	D_\delta^{[],I_1\rightarrow I_2}(f) \geq \log(\mathrm{SC}(f_{\mathcal{X}_1},\ell)) - \ell \cdot H_2(\delta).
%	\end{equation*}
%\end{theorem}

\begin{theorem}[Communication Complexity Lower Bound for  $1$-Round $2  $-Machine Protocol]
	\label{theo:lb_12}	
	For learning a hypothesis $ h $ with a constant error $ \epsilon $, where the oracles have an outlier rate of $ 0\leq \lambda < \epsilon $, we have the following communication complexity lower bound for $ 1 $-round and $ 2$-machine communication protocol,
	\begin{equation*}
	R_\epsilon^{M_1\rightarrow M_2}(h) = \Omega\left(\frac{1-\epsilon}{1-\lambda} d\right),
	\end{equation*}
	for a constant success probability. Here  $ d $ is the VC-dimension of the hypothesis $ h $.
\end{theorem}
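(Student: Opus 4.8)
The plan is to combine Yao's minimax principle with a VC-dimension-based hard instance, and then reduce the resulting one-way communication task to a lossy-encoding (covering-number) argument in which the rate is driven by both the target accuracy $\epsilon$ and the outlier rate $\lambda$.

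First, by Yao's principle (Theorem~\ref{theo:yao}) it suffices to fix a single ``difficult'' distribution $\mu$ over inputs, that is, over the way labeled examples and outliers are dealt to $M_1$ (drawing from $\text{\sc Pos}^\lambda$) and $M_2$ (drawing from $\text{\sc Neg}^\lambda$), and to lower bound the communication of the best \emph{deterministic} one-way protocol $M_1\rightarrow M_2$ that outputs, with constant probability over $\mu$, a hypothesis of error below $\epsilon$. To build $\mu$ I would use the VC-dimension: fix a set $S=\{x_1,\dots,x_d\}$ shattered by $\mathcal{H}$, so every $b\in\{0,1\}^d$ is realized by some $c_b\in\mathcal{H}$ with $c_b(x_i)=b_i$. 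Take the data distribution $D$ to be uniform on $S$, draw the target index $b$ uniformly from $\{0,1\}^d$, let the genuine examples land on the two machines according to their $c_b$-label, and let an adversary spend each oracle's $\lambda$-fraction of corrupted responses in the way that keeps a machine's posterior about the ``other side's'' coordinates as close to uniform as possible, concretely by replaying $S$-points carrying the flipped label, so that a machine cannot tell its genuine examples from the injected ones.

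Second, I would set up the reduction and the counting bound. Any protocol whose output $h$ satisfies $\mathrm{err}_D(h)<\epsilon$ forces $M_2$, from $M_1$'s message $m$ together with its own (corrupted) sample, to name a string $\hat b$ with Hamming distance $d_H(\hat b,b)<\epsilon d$. Fixing $M_2$'s sample, $m$ ranges over at most $2^{|m|}$ values while $b$ ranges over an exponentially large set on which that sample is essentially uninformative once the adversary's flipped-label injections are accounted for; a volume bound on Hamming balls then shows that if $|m|$ is below a small enough absolute constant times $\frac{1-\epsilon}{1-\lambda}\,d$, two admissible targets $b,b'$ with $d_H(b,b')>2\epsilon d$ must collide, so the single $h$ output in both cases has error exceeding $\epsilon$ on at least one of them, a contradiction. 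The $(1-\epsilon)$ factor is the ``tolerable distortion'' discount in this covering bound; the $\tfrac{1}{1-\lambda}$ blow-up is forced by the corruption, since the injected examples mean that a message of a given length can reliably pin down the label pattern on only a $(1-\lambda)$-fraction of the relevant points, so conveying enough for distortion below $\epsilon d$ needs proportionally more bits.

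Finally, averaging over $\mu$ and invoking Yao's principle turns the distributional lower bound into the stated bound on the randomized complexity $R_\epsilon^{M_1\rightarrow M_2}(h)$, for constant success probability. I expect the crux to be the second step: one must (i) check that the chosen dealing of examples and outliers is genuinely realizable by valid $\text{\sc Pos}^\lambda$/$\text{\sc Neg}^\lambda$ oracles with a target in $\mathcal{H}$, in particular respecting $\lambda<\epsilon$, which is what keeps the instance learnable at all, and (ii) cleanly separate the two effects so that the accuracy discount $(1-\epsilon)$ and the corruption blow-up $\tfrac{1}{1-\lambda}$ compound multiplicatively rather than interfering. A secondary subtlety is showing that $M_2$'s own sample cannot substitute for the communicated bits, which is precisely the job of the adversary's flipped-label replay strategy.
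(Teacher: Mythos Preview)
Your high-level plan---Yao's principle, a shattered set of size $d$, and a Hamming-ball covering bound---is exactly the paper's. The gap is in your hard distribution. You give $M_1$ the positive sample and $M_2$ the negative sample over the shattered set $S=\{x_1,\dots,x_d\}$ under a uniformly random target $b\in\{0,1\}^d$. But then $M_2$'s own draws from $\text{\sc Neg}^\lambda$ already reveal $b$: with enough samples, every genuine negative coordinate appears with frequency roughly $(1-\lambda)/|\{i:b_i=0\}|$, and since $\lambda<1/2$ the adversary's $\lambda$-budget of injected points can mimic this frequency on at most $O\bigl(\tfrac{\lambda}{1-\lambda}\,d\bigr)<\epsilon d$ additional coordinates. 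So $M_2$ recovers $b$ to within Hamming distance below $\epsilon d$ with \emph{zero} communication, and the instance is not hard. The ``flipped-label replay'' you propose does not drive $M_2$'s posterior close to uniform; at best it muddies $O(\lambda d)$ coordinates, which is already below the tolerance. This is not a secondary subtlety---it is the whole game.

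The paper sidesteps this by not giving $M_2$ a sample at all. Following Kremer--Nisan--Ron, $M_1$ holds a single element $x_R$ that encodes the entire target string $R\subseteq S$, and $M_2$ holds a single query point $y\in S$; the error is over uniform $(x_R,y)$. Now $M_2$ has nothing to exploit except $M_1$'s message, so that message must encode $R$ up to the allowed distortion. The $\lambda$-dependence then enters through error accounting rather than message capacity: the paper writes the expected error as $(1-\lambda)\cdot\tfrac{1}{d}\,\mathbb{E}_z[\mathrm{dist}(z,P_{1,2}(z))]+\lambda$, so achieving error at most $\epsilon$ forces the average Hamming distortion to be at most $\tfrac{\epsilon-\lambda}{1-\lambda}\,d$, and the covering lemma (Lemma~\ref{lemma:decoding-error}) then yields a message length of at least $\tfrac{8(1-\epsilon)}{15(1-\lambda)}\,d$. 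Your heuristic that the message ``pins down only a $(1-\lambda)$-fraction of coordinates'' is not the operative mechanism---the message is never corrupted; what changes is the required distortion, from $\epsilon$ down to $\tfrac{\epsilon-\lambda}{1-\lambda}$.
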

%Different from the upper bound result on the communication complexity in Corollary \ref{coro:upper-bd}, this lower bound does not depend on the description length $ b $ of training examples. 
From the above lower bound, one can observe that the necessary communicated bits between two machines is  monotonically increasing with the outlier rate $ \lambda $ when the error rate $ \epsilon $ is fixed, while monotonically decreasing with the specified error rate $ \epsilon $. When there are no outliers in the samples, \emph{i.e.}, $ \lambda=0 $, the lower bound for learning a hypothesis with $ e(h)<\epsilon $ reduces to $ \Omega((1-\epsilon)d) $, which matches the result provided in \cite{balcan2012distributed} for faithful oracles.

%Specifically, if $ \lambda $ reaches the desired error rate $ \epsilon $, the required communication approaches infinity, which means it is impossible to find a low error hypothesis with overwhelming outlier examples.
%The proof of the above theorem is largely inspired by \citet{kremer1999randomized}.

Applying Theorem \ref{theo:lb_12} directly provides a communication lower bound on learning the  linear half spaces in the real space $ \mathbb{R}^p $,  whose VC-dimension is known to be $ (p+1) $.
\begin{corollary}
	For the hypothesis $ \hbar $ of linear half spaces in $ \mathbb{R}^p $, applying the above theorem gives the one-round communication complexity between two machines to learn a half space is lower bounded as
	\begin{equation*}
	R_{\epsilon}^{M_1 \rightarrow M_2}(\hbar) = \Omega\left(\frac{1-\epsilon}{1-\lambda}  (p+1)\right),
	\end{equation*}	
	for a constant success probability.
\end{corollary}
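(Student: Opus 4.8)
The plan is to obtain the Corollary as a direct specialization of Theorem~\ref{theo:lb_12}: once the VC-dimension of the half-space class is identified as $p+1$, substituting $d=p+1$ into the bound $R_\epsilon^{M_1\rightarrow M_2}(h)=\Omega\!\left(\frac{1-\epsilon}{1-\lambda}d\right)$ gives exactly the claimed estimate, under the standing hypothesis $0\le\lambda<\epsilon$. So the only real content is (i) pinning down $\mathrm{VCdim}(\hbar)$ for $\hbar=\{x\mapsto \mathrm{sign}(\langle w,x\rangle+b): w\in\mathbb{R}^p,\,b\in\mathbb{R}\}$, and (ii) checking that the ``difficult'' example distribution underlying Theorem~\ref{theo:lb_12} can be instantiated with labels coming from half-spaces.

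For (i) I would recall the classical argument. For the lower bound $\mathrm{VCdim}(\hbar)\ge p+1$, exhibit an explicit shatterable configuration of $p+1$ points in general position~--~for instance the origin together with the standard basis vectors $e_1,\dots,e_p$~--~and check that each of the $2^{p+1}$ sign patterns on these points is realized by a suitable choice of $w$ and $b$. For the matching upper bound $\mathrm{VCdim}(\hbar)\le p+1$, invoke Radon's theorem: any $p+2$ points in $\mathbb{R}^p$ split into two groups whose convex hulls intersect, and no affine half-space separates such a pair of groups, so no set of $p+2$ points is shattered. Together these give $\mathrm{VCdim}(\hbar)=p+1$, matching the value quoted in the statement.

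The point that deserves a moment of care~--~and where this stops being a one-line corollary~--~is step (ii). The lower bound in Theorem~\ref{theo:lb_12} is proved by fixing a hard distribution of examples over the two machines built from a set of $d$ points shattered by the hypothesis class and then bounding the information that must cross the channel; to port it to $\hbar$ we only need a shattered set of size $d=p+1$, and the origin-plus-basis configuration above supplies one, with the accompanying product distribution over its $2^{p+1}$ labelings being a genuine distribution over half-space-labeled instances in $\mathbb{R}^p$. Hence the hard instance transfers with no loss and Theorem~\ref{theo:lb_12} applies with $d=p+1$. I expect this realizability check to be the only nontrivial step; the VC-dimension computation itself is standard.
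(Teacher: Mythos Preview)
Your proposal is correct and matches the paper's own treatment: the corollary is obtained exactly by plugging the well-known value $\mathrm{VCdim}(\hbar)=p+1$ into Theorem~\ref{theo:lb_12}, and the paper says no more than that. Your added care in verifying the VC-dimension via an explicit shattered set and Radon's theorem, and in noting that this shattered set instantiates the hard distribution of Theorem~\ref{theo:lb_12}, is sound and in fact more detailed than what the paper provides.
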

In Section \ref{sec:ws}, we compare the above  lower bound for half space learning with the bound given in \cite{daume2012efficient}, and we propose a communication-efficient distributed half-space learning algorithm, whose communication complexity is close to the above lower bound.

\subsection{Lower Bound for $t$-Round  $ 2 $-Machine Communication}
We then consider a more complicated and practical communication protocol, which allows the two machines to communicate $ t > 1$ messages in turn. Such a protocol is called a $t$-Round  $ 2 $-Machine protocol. Here we assume, by convention, that one machine sends the first message and the recipient machine of the last message announces the learned hypothesis $ h $. To lower bound the  complexity of multiple round communications, we need to investigate the size of the message  sent in each round and invoke a technique of round elimination. Now we elaborate on how we can obtain the desired complexity bound.

We first provide a more detailed description on the multiple round communication protocol, which explicitly specifies the size of message sent in each round.
\begin{definition}
	A $ [t;\ell_1,\ldots,\ell_t]^{M_1} $ (resp. $ [t;\ell_1,\ldots,\ell_t]^{M_2} $) communication protocol is the protocol where the machine $ M_1 $ (resp. $ M_2 $) starts the communication, the size of the $ i $th message is $ \ell_i $ bits, and the communication goes on for $ t $ rounds.
\end{definition}
The communication complexity for such a protocol is then proportional to $ \sum_{i=1}^t \ell_i $. We employ a \emph{round elimination} technique \cite{sen2003lower} to deduce the communication complexity lower bound for a $ t $-round protocol from a $ (t-1) $-round protocol. After applying the round elimination for $ (t-1) $ times, it is shown that we only need to bound the complexity of a $ 1 $-round protocol, which has been provided in Theorem \ref{theo:lb_12}.

The round elimination, which is formally described by the following lemma, is based on the intuition that the existence of a ``good''  $ t $ round protocol  with $ M_1 $ starting, implied that there exists a ``good'' $ (t-1) $-round protocol where $ M_2 $ communicates first.
\begin{lemma}[Round Elimination, adapted from  \cite{sen2003lower}]
	\label{lemma:RE}
	Assume $ h $ is a hypothesis to learn. Suppose the communication between two machines has a $ [t;\ell_1,\cdots,\ell_t]^{M_1} $ protocol which produces a hypothesis with error less than $ \epsilon $. Then there is a $ [t-1;\ell_1+\ell_2,\ell_3,\cdots,\ell_t]^{M_2} $  protocol for learning $ h $ with error less than $ \epsilon + \sqrt{\ell_1/d} $. Here $ d $ is also the VC-dimension of $ h $.
\end{lemma}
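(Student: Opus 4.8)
The plan is to prove the lemma by a simulation argument of the classical round-elimination type \cite{sen2003lower}: if $M_1$'s opening message is short relative to the VC-dimension $d$, then $M_2$ can \emph{manufacture} a plausible copy of that message itself, thereby folding the first two rounds of the original protocol into a single opening move by $M_2$. I would carry this out over a hard input distribution in the spirit of Theorem~\ref{theo:lb_12}, namely a $d$-fold product tied to a set $S=\{x_1,\dots,x_d\}$ shattered by $\mathcal H$: the $j$-th coordinate of $M_1$'s input is the (possibly corrupted) label of $x_j$ residing on $M_1$, while $M_2$'s input encodes a uniformly random query coordinate $j\in[d]$ together with its own side of the $j$-th subproblem, and any hypothesis with error $<\epsilon$ is forced to essentially answer subproblem $j$. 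Write the given $[t;\ell_1,\dots,\ell_t]^{M_1}$ protocol as $\Pi$, with $M_1$'s first message $\mu_1$ ($\ell_1$ bits), $M_2$'s reply $\mu_2$ ($\ell_2$ bits), and remaining messages $\mu_3,\dots,\mu_t$.

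The crux is an information-averaging step. Since $\mu_1$ is a function of $M_1$'s input (and the shared randomness) and carries at most $\ell_1$ bits, the chain rule together with the product structure of the prior gives $\sum_{j=1}^{d} I\!\left(X_j;\mu_1 \mid X_1,\dots,X_{j-1}\right) \le H(\mu_1) \le \ell_1$, so on average over $j$ the per-coordinate information is at most $\ell_1/d$. Hence there is a coordinate $j^\star$ and a fixing of $M_1$'s inputs on the remaining coordinates (and of the randomness) for which the conditional law of $\mu_1$ given $M_1$'s input on subproblem $j^\star$ is within statistical distance $\sqrt{\ell_1/(2d)} \le \sqrt{\ell_1/d}$ of its prior marginal, by Pinsker's inequality and Jensen.

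I then construct the $(t-1)$-round, $M_2$-first protocol $\Pi'$ on subproblem $j^\star$ as follows. Machine $M_2$ samples $\widehat\mu_1$ from the (publicly known) marginal of $\mu_1$, computes the reply $\widehat\mu_2$ that $\Pi$ would have $M_2$ send on seeing $\widehat\mu_1$ (which $M_2$ can do, since it has its own input and the randomness), and sends the pair $(\widehat\mu_1,\widehat\mu_2)$ as its opening message — costing $\ell_1+\ell_2$ bits. Machine $M_1$ reads $(\widehat\mu_1,\widehat\mu_2)$, \emph{pretends} it had itself sent $\widehat\mu_1$, and resumes $\Pi$ from the third message onward, so the parties exchange $\mu_3,\dots,\mu_t$ exactly as in $\Pi$; original rounds $1$ and $2$ have collapsed into the new opening round and rounds $3,\dots,t$ shift down by one, yielding a $[t-1;\ell_1+\ell_2,\ell_3,\dots,\ell_t]^{M_2}$ protocol whose output is whatever hypothesis $\Pi$ produces. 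Since $\Pi'$ reproduces the transcript law of $\Pi$ on subproblem $j^\star$ exactly except that $\widehat\mu_1$ is drawn from the marginal rather than the true conditional, the total variation between the two experiments is at most $\sqrt{\ell_1/d}$, so the error rises from $<\epsilon$ to $<\epsilon+\sqrt{\ell_1/d}$; the constant failure probability $\delta$ is tracked identically.

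The step I expect to be the main obstacle is not the Pinsker estimate but the information-theoretic bookkeeping around it: one must set up the $d$-fold embedding so that the hard prior is genuinely a product across coordinates (so the chain-rule bound on $I(X_j;\mu_1)$ survives the conditioning on the other coordinates), and one must verify that fixing those coordinates and substituting a resampled opening message leaves $M_2$'s half of subproblem $j^\star$ — and hence the reduction ``answer subproblem $j^\star$'' $\Leftarrow$ ``output a low-error $h$'' — intact. This self-reducibility / combinatorial-rectangle property of the embedded single-copy problem is where care is needed; once it is in place, iterating the lemma $t-1$ times and invoking Theorem~\ref{theo:lb_12} for the surviving $1$-round protocol is immediate.
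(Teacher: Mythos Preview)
The paper does not actually prove this lemma: it is stated as ``adapted from \cite{sen2003lower}'' and then used as a black box in the proof of Theorem~\ref{theo:t-round}. So there is no in-paper argument to compare against.

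Your proposal is the standard round-elimination argument from the cited source and is essentially correct: chain rule on the $d$-fold product to bound the average per-coordinate information in $\mu_1$ by $\ell_1/d$, Pinsker plus averaging to find a coordinate $j^\star$ and a fixing where the conditional law of $\mu_1$ is $\sqrt{\ell_1/d}$-close in total variation to its marginal, then have $M_2$ sample a fake $\widehat\mu_1$ from that marginal, compute $\widehat\mu_2$, and ship both as one opening message. The error increase by the total-variation gap is the right bookkeeping. The caveat you flag at the end is exactly the point that deserves care: the lemma as written speaks of ``learning $h$'' in the abstract, but the argument you give (and the one in \cite{sen2003lower}) is inherently distributional over a self-reducible $d$-fold product instance, so the conclusion is really about distributional error on that hard instance. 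This is fine for the paper's purposes, since Theorem~\ref{theo:t-round} only needs the lemma to feed into the distributional lower bound of Theorem~\ref{theo:lb_12}, but you should state the lemma you actually prove in those terms rather than the looser phrasing the paper uses.
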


%\begin{proof}
%	For each $ j \in [n] $, we construct a randomized protocol $ P_j $ for $ f_{t-1} $, using $ t $ machines: the input
%	
%%	Assume a $ t $-round randomized protocol $ P_m(f) $ with error probability $ \delta $. For any distribution $ D $ on $ X \times Y $, we will construct a deterministic $ t-1 $ round algorithm for $ f $ that errs on at most $ \epsilon/2 $ of the inputs from the distribution $ D $.
%%	
%%	Let $ I=\{1,\ldots,m\} $. Define a distribution $ D^* $ on $ X^m \times I \times Y $ as follows: for each $ 1\leq j \leq m $ we choose (independently) $ (x_j,y_j) $ according to distribution $ D $, and we choose $ i $ uniformly at random in $ I $. We set $ y=y_i $ (and throw away all other $ y_j $'s).
%%	
%%	Let $ A $ be a deterministic algorithm for $ P_m(f) $
%	
%\end{proof}

Applying the above round elimination lemma,  we can reduce any $ t $-round communication protocol to a $ 1 $-round communication protocol. Thus applying the communication complexity for $ 1 $-round protocol in Theorem \ref{theo:lb_12} gives the following lower bound.

\begin{theorem} [Communication Complexity Lower Bound for  $ t $-Round $ 2 $-Machine Protocols]
	\label{theo:t-round}
	Let $ \epsilon $ be the specified error rate of $ h $. Let $ \lambda $ be the outlier rate in the oracles with $ \lambda < \epsilon $.
	Consider a $ t $-round  communication protocol to learn hypothesis $ h $.  Then the communication complexity is lower bounded as
	\begin{equation*}
	R_\epsilon^{M_1 \leftrightarrow M_2} = \Omega\left(\frac{1-\epsilon}{1-\lambda}  \frac{d}{t^2}\right),
	\end{equation*}
	for a constant success probability. Here $ d $ is the VC-dimension of $ h $.
\end{theorem}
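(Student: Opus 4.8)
The plan is to reduce any $t$-round $2$-machine protocol to a $1$-round protocol by applying the round-elimination lemma (Lemma~\ref{lemma:RE}) a total of $t-1$ times, and then to invoke the $1$-round lower bound of Theorem~\ref{theo:lb_12}. Concretely, set $C := R_\epsilon^{M_1\leftrightarrow M_2}$ and fix an optimal protocol realizing it, say a $[t;\ell_1,\dots,\ell_t]^{M_1}$ protocol with $\sum_{i=1}^t\ell_i = C$ that outputs a hypothesis of error below $\epsilon$ with the (constant) success probability $1-\delta$.

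First I would iterate Lemma~\ref{lemma:RE}. One application turns a $[s;m_1,m_2,\dots,m_s]$ protocol into an $[s-1;m_1+m_2,m_3,\dots,m_s]$ one, with the roles of $M_1,M_2$ swapped at each step (harmless, since the lemma is symmetric), keeping the total communication at $C$ and inflating the error guarantee by $\sqrt{m_1/d}$. Because the first two messages are merged at every stage, the $j$-th elimination costs $\sqrt{(\ell_1+\dots+\ell_j)/d}$; after $t-1$ eliminations we are left with a genuine $1$-round protocol of communication $C$ whose error is at most $\epsilon' := \epsilon + \sum_{j=1}^{t-1}\sqrt{(\ell_1+\dots+\ell_j)/d} \le \epsilon + (t-1)\sqrt{C/d}$, using the crude bound $\ell_1+\dots+\ell_j\le C$ for every $j$.

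Now I would argue by contradiction. Suppose $C < c\,\frac{1-\epsilon}{1-\lambda}\frac{d}{t^2}$ for a small constant $c$, to be chosen depending only on $\epsilon$ and on the constant implicit in Theorem~\ref{theo:lb_12}. Then $(t-1)\sqrt{C/d} < \sqrt{c\,\frac{1-\epsilon}{1-\lambda}}$, which, taking $c\le\tfrac{1-\epsilon}{8}$ and using $1-\lambda>\tfrac12$, is at most $\tfrac{1-\epsilon}{2}$; hence $\epsilon' < \tfrac{1+\epsilon}{2} < 1$ and $1-\epsilon' > \tfrac{1-\epsilon}{2}$, and the hypotheses $\lambda<\epsilon\le\epsilon'$ and $\ell_1+\dots+\ell_j<d$ needed to apply Lemma~\ref{lemma:RE} and Theorem~\ref{theo:lb_12} all hold throughout. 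Applying Theorem~\ref{theo:lb_12} to the reduced $1$-round protocol (communication $C$, error below $\epsilon'$) gives $C \ge c_1\,\frac{1-\epsilon'}{1-\lambda}d > \tfrac{c_1}{2}\,\frac{1-\epsilon}{1-\lambda}d$, where $c_1$ is that theorem's implicit constant; choosing also $c\le c_1/2$ yields $C > c\,\frac{1-\epsilon}{1-\lambda}d \ge c\,\frac{1-\epsilon}{1-\lambda}\frac{d}{t^2}$, contradicting the assumption. Therefore $R_\epsilon^{M_1\leftrightarrow M_2} = \Omega\!\left(\frac{1-\epsilon}{1-\lambda}\frac{d}{t^2}\right)$.

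The step I expect to be the main obstacle is controlling the \emph{accumulated} error through the $t-1$ eliminations. Since round elimination merges messages, the $j$-th step pays $\sqrt{(\ell_1+\dots+\ell_j)/d}$ rather than $\sqrt{\ell_j/d}$, so it is the partial sums of message lengths, not the individual lengths, that drive the blow-up; bounding each of them by $\sqrt{C/d}$ costs a factor of $t$, and requiring $t\sqrt{C/d}$ to stay bounded by a constant is exactly what forces $C=\Omega(d/t^2)$ and accounts for the quadratic dependence on the number of rounds. A secondary concern is keeping the reduced error $\epsilon'$ strictly below $1$ so that Theorem~\ref{theo:lb_12} is not vacuous — this is precisely what the contradiction hypothesis buys us — together with checking that the lemma's preconditions (in particular $\lambda<\epsilon'$ and the message lengths remaining below $d$) persist at every stage of the reduction.
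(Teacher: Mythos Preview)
Your proposal is correct and follows essentially the same route as the paper: iterate the round-elimination Lemma~\ref{lemma:RE} $(t-1)$ times, bound the accumulated error by $t\sqrt{C/d}$ via the crude estimate $\ell_1+\dots+\ell_j\le C$, and derive a contradiction with the $1$-round lower bound of Theorem~\ref{theo:lb_12} under the assumption $C=o\!\left(\tfrac{1-\epsilon}{1-\lambda}\tfrac{d}{t^2}\right)$. If anything, you are more careful than the paper in tracking the constants and in verifying that $\epsilon'$ stays bounded away from $1$ so that Theorem~\ref{theo:lb_12} is non-vacuous.
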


\subsection{Lower Bound for $ t $-Round  $ k $-Machine Communication}
\label{sec:t-round-k-machine}
We now proceed to obtain the communication complexity lower bound for a more general  protocol $ P^{M_1\rightarrow \ldots \rightarrow M_k} $. To extend the above lower bound for $ 2 $-machine communication to the $ k $-machine case, we use the symmetrization technique \cite{phillips2012lower} to reduce  $ k $-machine communication to $ 2 $-machine communication. The symmetrization constructs a $ 2 $-machine protocol $ P^{M_1 \rightarrow M_2} $ from a $ k $-machine protocol $ P^{M_1 \rightarrow \ldots \rightarrow M_k} $ with communication cost $ \mathbb{E}[R_\epsilon^{M_1 \rightarrow M_2} ]=2R_\epsilon^{M_1 \rightarrow \ldots \rightarrow M_k} /k $, where the expectation is taken on the randomness of the samples returned by oracles. Then we can derive the  bound for $ R_\epsilon^{M_1 \rightarrow \ldots \rightarrow M_k} $ from $R_\epsilon^{M_1 \rightarrow M_2} $.  The result is formally stated in the following theorem, whose proofs are given in the appendix.

\begin{theorem}[Communication Complexity Lower Bound for  $ 1 $-Round $ k $-Machine Protocols]
	\label{theo:k-machine}
	Let $ \epsilon $ be the specified error rate of learned hypothesis $ h $.	Let $ \lambda $ be the outlier rate in the oracles with $ \lambda < \epsilon $. 	For  $ 1 $-round $ k $-machine protocol, its communication complexity is lower bounded as
		\begin{equation*}
		R_\epsilon^{M_1\rightarrow \ldots \rightarrow M_k} = \Omega \left(\frac{1-2\epsilon}{1-\lambda} d k \right),
		\end{equation*}
		for a constant success probability. Here $ d $ is the minimal VC-dimension of the hypothesis $ h $.
\end{theorem}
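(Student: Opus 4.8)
The plan is to reduce the $k$-machine protocol to a $2$-machine protocol via the symmetrization argument of \cite{phillips2012lower}, and then invoke Theorem~\ref{theo:lb_12}. As in the proof of Theorem~\ref{theo:lb_12}, by Yao's principle \cite{yao1977probabilistic} it suffices to lower bound the distributional communication complexity on a well-chosen ``hard'' input distribution; the extra requirement here is that this distribution be a \emph{product} across the $k$ machines. Concretely, I would let each machine independently draw its block of examples from the same hard distribution used in the $2$-machine analysis, so that any one machine (or the simulator) can locally generate a statistically faithful copy of any other machine's input without communication.

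Next I would fix an optimal $1$-round $k$-machine protocol $\Pi$ of cost $C := R_\epsilon^{M_1 \rightarrow \ldots \rightarrow M_k}$, and let $\ell_i$ denote the (expected, over the oracle and public randomness) length of the message emitted by $M_i$, so that $\sum_{i=1}^k \ell_i = C$ (with $M_k$'s ``message'' being the announced hypothesis). From $\Pi$ I build a $1$-round $2$-machine protocol $\Pi'$ as follows: sample an index $i\in\{1,\dots,k\}$ uniformly using public randomness; Alice plays the role of machine $M_i$ on her input, while Bob holds the inputs of the remaining $k-1$ machines and in addition simulates the coordinator. Because the hard distribution is a product, Bob can sample the missing $i$-th block himself, so no input needs to be transmitted; Bob forwards to Alice the message destined for $M_i$ (length $\le \ell_{i-1}$, taking $\ell_0:=0$), Alice replies with $M_i$'s outgoing message (length $\ell_i$), and Bob then completes the simulation and outputs $h$. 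This is a legal $P^{M_1\rightarrow M_2}$ protocol with $\mathbb{E}_i[\mathrm{cost}(\Pi')] \le \mathbb{E}_i[\ell_{i-1}+\ell_i] \le \frac{2}{k}\sum_{i} \ell_i = 2C/k$, which recovers the claimed relation $\mathbb{E}[R_\epsilon^{M_1\rightarrow M_2}] = 2 R_\epsilon^{M_1\rightarrow\ldots\rightarrow M_k}/k$.

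To finish, I would control the error of $\Pi'$. Since $\Pi$ outputs a hypothesis of error $<\epsilon$ on the product hard distribution and $\Pi'$ reproduces exactly that output distribution conditioned on the choice of $i$, a Markov/union-bound argument over the random special machine shows that $\Pi'$ learns $h$ to error $<2\epsilon$ with constant probability. Applying Theorem~\ref{theo:lb_12} to $\Pi'$ with error parameter $2\epsilon$ gives $\mathbb{E}[R_\epsilon^{M_1\rightarrow M_2}] = \Omega\!\left(\frac{1-2\epsilon}{1-\lambda} d\right)$, and combining with the displayed relation yields $R_\epsilon^{M_1\rightarrow\ldots\rightarrow M_k} = \Omega\!\left(\frac{1-2\epsilon}{1-\lambda} d k\right)$, as claimed; here $d$ is the (minimal) VC-dimension of $h$.

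The main obstacle is the simulation-and-error bookkeeping. I expect the delicate points to be: (i) verifying that Bob can stand in for the coordinator and the other $k-1$ machines using \emph{no} extra communication — this is exactly where the product structure of the hard distribution is essential, and where the message-passing versus coordinator equivalence costs the factor of $2$ (also why Alice is charged both the message into and out of $M_i$ in the chain topology); and (ii) tracking precisely how the per-instance error guarantee of $\Pi$ degrades under the random restriction to a single machine, which is what turns the $1-\epsilon$ of Theorem~\ref{theo:lb_12} into the $1-2\epsilon$ in the statement. A minor additional care point is that all message lengths and error probabilities must be interpreted as expectations over the oracle randomness and the public coin before Yao's principle is applied.
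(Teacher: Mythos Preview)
Your proposal follows the same overall strategy as the paper---reduce the $k$-machine protocol to a $2$-machine one via the symmetrization of \cite{phillips2012lower} and then invoke Theorem~\ref{theo:lb_12}---but the specific symmetrization you describe differs from the paper's and, as written, leaves a gap. The paper draws \emph{two} uniformly random distinct indices $i,j\in\{1,\dots,k\}$: Alice plays machine $i$ with her own input $x$, Bob plays machine $j$ with his own input $y$, and the remaining $k-2$ inputs are generated from shared randomness so that both parties know them and can simulate those machines locally. Because the $k$ inputs are i.i.d.\ and the roles are symmetric, the simulated inputs have exactly the product hard distribution, so $P'$ inherits the error of $P$; and since only the messages emanating from machines $i$ and $j$ must actually cross the Alice--Bob cut, $\mathbb{E}[c(P')]=2c(P)/k$. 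Crucially, $(x,y)$ is then a bona fide instance of the $2$-machine problem in Theorem~\ref{theo:lb_12}, so that lower bound applies directly.

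In your construction only one random index $i$ is chosen, Alice plays $M_i$, and ``Bob holds the inputs of the remaining $k-1$ machines.'' This does not cleanly reduce to the $2$-machine problem of Theorem~\ref{theo:lb_12}. If Bob literally holds $k-1$ sample blocks, the resulting $2$-party instance is not the one for which Theorem~\ref{theo:lb_12} was proved (there each party has a single input from the hard marginal), and you would need a separate argument that the lower bound survives when one side is given $k-1$ blocks. If instead Bob has no input of his own and merely samples the other $k-1$ blocks privately, then $\Pi'$ solves a degenerate problem in which Bob has no data at all, again not the hypothesis of Theorem~\ref{theo:lb_12}. The standard fix is exactly to give Bob his own input $y$ and a second random slot $j$, which is the paper's construction. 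A smaller point: your $\Pi'$ is in general a two-message protocol (Bob forwards the message of length $\ell_{i-1}$ into $M_i$, Alice replies with $\ell_i$), not a one-way $P^{M_1\to M_2}$ protocol as you assert; this is harmless for the asymptotic bound but should be stated correctly.
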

In the above theorem, we observe that there is a factor of $ 2\epsilon $ instead of $ \epsilon $, which appears due to the employed the symmetrization technique and introduces a gap with the lower bound for $ 2 $-machine protocol. In the future, we will further narrow such a gap.
 
Then similar to obtaining the results in Theorem \ref{theo:t-round} and Theorem \ref{theo:k-machine}, applying the  round elimination and  machine elimination by symmetrization together gives a  general  lower bound on the communication complexity for the $ t $-round $ k $-machine protocol, as stated in the following theorem.

\begin{theorem}[Communication Complexity Lower Bound for  $ t $-Round $ k $-Machine Protocols]
	\label{coro:k-machine-t-round}
	Let $ \epsilon $ be the specified error rate of learned $ h $. Let $ \lambda $ be the outlier rate in the oracles with $ \lambda < \epsilon $. Then for a $ t $-round $k$-machine protocol, its communication complexity is lower bounded as 
	\begin{equation*}
	 R_\epsilon^{M_1\leftrightarrow M_2 \leftrightarrow \ldots \leftrightarrow M_k}= \Omega \left(\frac{1-2\epsilon}{1-\lambda}\frac{d  k}{t^2}\right),
	\end{equation*}
	for a constant success probability. Here $ d $ is the minimal VC-dimension of the hypothesis $ h $.
\end{theorem}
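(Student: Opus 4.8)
The plan is to obtain this bound by \emph{composing} the two reduction techniques already developed above, namely the round elimination of Lemma~\ref{lemma:RE} and the symmetrization (machine elimination) underlying Theorem~\ref{theo:k-machine}, and then falling back on the base case of Theorem~\ref{theo:lb_12}. Relative to a $1$-round $2$-machine protocol, a $t$-round $k$-machine protocol carries two essentially independent handicaps --- it may spend $t$ rounds, and it is spread over $k$ parties --- and each has already been shown to cost a separate multiplicative factor ($t^{-2}$ in Theorem~\ref{theo:t-round} and $k$ in Theorem~\ref{theo:k-machine}). Because the two reductions act on disjoint structure, they can be chained, and their costs multiply.

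\emph{Step 1 (strip the extra machines).} Fix any $t$-round $k$-machine protocol $P^{M_1\leftrightarrow\cdots\leftrightarrow M_k}$ that learns $h$ to error $<\epsilon$ with total communication $C$, run on the hard product input distribution used in the proof of Theorem~\ref{theo:k-machine}. Apply the symmetrization of \cite{phillips2012lower} exactly as there: two players simulate a uniformly random ordered pair of the $k$ machines and use the product structure to hallucinate the remaining $k-2$ inputs. This produces a $t$-round $2$-machine protocol $P^{M_1\leftrightarrow M_2}$, on the corresponding $2$-machine hard distribution, with expected communication $2C/k$. As in Theorem~\ref{theo:k-machine}, faithfulness of the simulation holds only after relaxing the error guarantee from $\epsilon$ to $2\epsilon$ --- this is precisely what turns the $1-\epsilon$ into a $1-2\epsilon$ and is the origin of the same slack already present in Theorem~\ref{theo:k-machine}.

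\emph{Step 2 (collapse the rounds).} Invoke Theorem~\ref{theo:t-round} on the protocol from Step~1, but with its error parameter set to $2\epsilon$ instead of $\epsilon$: any $t$-round $2$-machine protocol learning $h$ to error below $2\epsilon$ must communicate $\Omega\!\left(\frac{1-2\epsilon}{1-\lambda}\frac{d}{t^2}\right)$ bits. Since Theorem~\ref{theo:t-round} is itself proved by applying Lemma~\ref{lemma:RE} a total of $t-1$ times down to Theorem~\ref{theo:lb_12}, and its argument runs over exactly the hard distribution that Step~1 yields, Yao's principle carries this lower bound over to the \emph{expected} communication $2C/k$ of $P^{M_1\leftrightarrow M_2}$; rearranging gives $C=\Omega\!\left(\frac{1-2\epsilon}{1-\lambda}\frac{dk}{t^2}\right)$, as claimed. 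One could equivalently interchange the order --- first apply round elimination $t-1$ times to the $k$-machine protocol, obtaining a $1$-round $k$-machine protocol of error $\epsilon+\sum_i\sqrt{\ell_i/d}$, and then invoke Theorem~\ref{theo:k-machine} --- reaching the same bound.

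\emph{Expected main obstacle.} The reduction skeleton is routine; the delicate part is tracking the error parameter through both steps so that the final bound retains the clean form $\frac{1-2\epsilon}{1-\lambda}\frac{dk}{t^2}$. Round elimination perturbs the error \emph{additively}, by $\sqrt{\ell_i/d}$ at each of the $t-1$ steps, so one must argue --- by concavity, with equal message sizes $\ell_i$ the extremal case --- that whenever $C$ is below the target the per-round messages are small enough to keep $\sum_i\sqrt{\ell_i/d}$ within a fixed fraction of $1-2\epsilon$, whereas symmetrization perturbs the error \emph{multiplicatively}, inflating $\epsilon$ to $2\epsilon$. One then has to check that the composition of these two perturbations never pushes the error past $1/2$, so that the numerator $1-2(\cdot)$ stays a positive $\Omega(1-2\epsilon)$ and Theorem~\ref{theo:lb_12} remains non-vacuous, and that constants degrade only by bounded factors. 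A second, more structural point --- identical to one already faced in Theorem~\ref{theo:k-machine} --- is verifying that symmetrization in Step~1 maps the hard $k$-machine distribution onto precisely the hard $2$-machine distribution required by Theorem~\ref{theo:t-round} (hence Theorem~\ref{theo:lb_12}); this is where the ``minimal VC-dimension'' hypothesis is used, and where one must also reconcile the worst-case lower bounds of the earlier theorems with the expected-cost statement coming out of the symmetrization, again via Yao's principle.
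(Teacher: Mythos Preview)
Your proposal is correct and follows essentially the same approach as the paper: the paper's own argument is simply the one-line remark that ``applying the round elimination and machine elimination by symmetrization together'' yields the bound, i.e., exactly the composition of Lemma~\ref{lemma:RE} (via Theorem~\ref{theo:t-round}) with the symmetrization of Theorem~\ref{theo:k-machine} that you spell out. Your write-up is considerably more detailed than the paper's, and your observations about tracking the error parameter through both reductions and about the interchangeability of their order are accurate elaborations of that sketch.
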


\subsection{A Communication-Efficient Protocol: Weighted Sampling}
\label{sec:ws}
We present an efficient distributed PAC learning algorithm, focusing on the $ 2 $-machine case, where the machines are denoted as $ A $ and $ B $. The proposed algorithm is mainly based on communicating learned hypotheses and difficult examples, similar to boosting: each hypothesis learned by $ A $ is communicated to and tested by $ B $. Then the difficult examples of $ B $ are randomly sampled by a probability, proportional to how many times the hypothesis fails on them, and communicated back to $ A $ for updating hypothesis. This two-machine protocol algorithm is developed based on the algorithm proposed in \cite{daume2012efficient}.

Following lemma \cite{daume2012efficient} says the random weighted sampling can always find a good representing subset of the entire set on machine $ B $.
\begin{lemma}
	\label{lemma:wsample}
	Let $ B $ have a weighted set of points $ X_B \subset \mathbb{R}^p$ with weight function $ w: X_B \rightarrow \mathbb{R}^+ $. For any constant $ 0<c<1 $, machine $ B $ can send a set $ R_B$ of size $ O(p/c) $  such that any hypothesis that correctly classifies all points in $ R_B $ will mis-classify points in $ X_B $ with a total weight at most $ c \sum_{x\in X_B} w(x) $. The set $ R_B $ can be constructed by a weighted random sample from $ (X_B,w) $ as in Alg. \ref{alg:mwu}, which succeeds with constant probability.
\end{lemma}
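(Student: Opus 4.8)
The plan is to recognize $R_B$ as an \emph{$\epsilon$-net} for a suitable range space of ``misclassification regions'' and then invoke the classical $\epsilon$-net theorem of VC theory; this essentially reproduces the argument of \cite{daume2012efficient}. First I would normalize the weights: set $W=\sum_{x\in X_B}w(x)$ and define a probability distribution $\mu$ on $X_B$ by $\mu(x)=w(x)/W$. With this normalization the conclusion we want, namely ``$h$ classifies every point of $R_B$ correctly $\Rightarrow$ the misclassified weight is at most $cW$'', is exactly ``$R_B\cap E_h=\emptyset \Rightarrow \mu(E_h)\le c$'', where $E_h=\{x\in X_B : h(x)\ne y(x)\}$ is the set of points $h$ gets wrong and $y(\cdot)$ denotes the (fixed) true labels.

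The next step is to bound the VC-dimension of the range space $(X_B,\mathcal{E})$ with $\mathcal{E}=\{E_h : h\in\mathcal{H}\}$. Writing $E_h = h^{-1}(1)\,\triangle\,y^{-1}(1)$ as the symmetric difference of the positive region of $h$ with the fixed set $y^{-1}(1)$, and using the standard fact that forming symmetric differences with a single fixed set does not increase VC-dimension, we obtain $\mathrm{VCdim}(\mathcal{E})\le\mathrm{VCdim}(\mathcal{H})=d$; for linear half-spaces in $\mathbb{R}^p$ this gives $d=p+1=O(p)$.

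Then I would apply the $\epsilon$-net theorem with $\epsilon=c$: a sample of $m=O\!\left(\frac{d}{c}\log\frac{1}{c}\right)$ points drawn i.i.d.\ from $\mu$ is, with any prescribed constant success probability, a $c$-net for $(X_B,\mathcal{E})$, i.e.\ it meets every range $E_h$ with $\mu(E_h)\ge c$. Taking $R_B$ to be exactly this sample — which is what the weighted random sampling in Alg.~\ref{alg:mwu} produces, since sampling $x$ with probability $w(x)/W$ is i.i.d.\ sampling from $\mu$ — the contrapositive says: if $h$ is correct on all of $R_B$ then $R_B\cap E_h=\emptyset$, hence $\mu(E_h)<c$, i.e.\ $\sum_{x\in E_h}w(x)<cW$, which is the claim. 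Since $c$ is a fixed constant, $\log(1/c)=O(1)$ and $m=O(d/c)=O(p/c)$, so $B$ sends only $O(p/c)$ points.

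The main obstacle is the standard one behind the $\epsilon$-net theorem itself, which I would either cite cleanly or reprove: because $\mathcal{H}$ (hence $\mathcal{E}$) is infinite, one cannot union-bound over all hypotheses directly, and the double-sampling/symmetrization argument combined with the Sauer--Shelah bound on the shatter function of $\mathcal{E}$ is needed to reduce to a finite union bound. A secondary point to verify carefully is that the sampling rule in Alg.~\ref{alg:mwu} genuinely realizes (or stochastically dominates) i.i.d.\ draws from $\mu$, so that the $\epsilon$-net guarantee transfers verbatim, and that the number of \emph{distinct} points communicated remains $O(p/c)$.
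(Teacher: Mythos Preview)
Your argument is correct and is the standard $\epsilon$-net route to this kind of statement. Note, however, that the paper does \emph{not} actually prove Lemma~\ref{lemma:wsample}: it is quoted from \cite{daume2012efficient} and used as a black box in the analysis of Algorithm~\ref{alg:ws}. So there is no ``paper's own proof'' to compare against here; what you have written is essentially a reconstruction of the argument behind the cited result, and it is the right one---normalize to a probability measure, recognize the misclassification sets $E_h$ as a range space of VC-dimension $O(p)$ (your symmetric-difference observation is exactly the clean way to see this), and apply the $\epsilon$-net theorem with $\epsilon=c$. Since $c$ is treated as a constant, the $\log(1/c)$ factor is absorbed and the sample size is $O(p/c)$ as claimed.
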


\begin{algorithm}
	\caption{Weighted Sampling ({\sc Ws})}
	\label{alg:ws}
	\begin{algorithmic}
		\STATE \textbf{Input:} $ X_A, X_B \subset \mathbb{R}^p$, parameters: $ 0 < \epsilon < 1 $, outlier fraction $ 0 \leq \lambda<0.5 $.
		\STATE \textbf{Initialization:} $ R_B = \emptyset, w_i^0=1, \forall x_i \in X_B, \rho=0.75, c=0.2 (1-\lambda)/(1-2\lambda)^2$.
		\FOR{$t=1,\dots,T=5\log_2(1/\epsilon) $}
		% A's move
		\STATE $ X_A = X_A \cup R_B $;
		\STATE $ h_A^t := \text{Learn}(X_A) $;
		\STATE send $ h_A^t $ to $ B $;
		% B's move
		\STATE $ R_B :=$ {\sc mwu}$ (X_B,h_A^t,\rho, c ) $ (see Alg.\ref{alg:mwu});
		\STATE send $ R_B $ to $ A $;
		\ENDFOR
		\STATE $ h_{AB} = \mathrm{Majority}(h^1_A,h^2_A,\dots,h^T_A) $.
		\STATE \textbf{Output:} $ h_{AB} $ (hypothesis with $ \epsilon $-error on $ X_A \cup X_B $).
	\end{algorithmic}
\end{algorithm}

\begin{algorithm}
	\caption{ \sc{mwu}$(X_B,h_A^t,\rho,c) $}
	\label{alg:mwu}
	\begin{algorithmic}
		\STATE \textbf{Input:} $ h_A^t, X_B \subset \mathbb{R}^p $, parameters: $ 0 < \rho , c <1 $.
		\FORALL{$x_i \in X_B $}
		\IF{ $ h_A^t(x_i)\neq y_i $}
		\STATE $ w_i^{t+1} = w_i^t(1+\rho) $;
		\ELSE
		\STATE $ w_i^{t+1} = w_i^t $;
		\ENDIF
		\ENDFOR
		\STATE randomly sample $ R_B $ from $ X_B $ according to $ w^{t+1} $, whose size $ |R_B| = \min\{p/c\log(p/c),p/c^2\} $.
		\STATE \textbf{Output:} $ R_B $.
	\end{algorithmic}
\end{algorithm}

Based on Lemma \ref{lemma:wsample}, we can obtain the following communication complexity result for Algorithm \ref{alg:ws} 
\begin{theorem}
	\label{theo:alg-ws}
	The two-machine two-way protocol Weighted Sampling in Alg. \ref{alg:ws} for linear separator in $ \mathbb{R}^p $ mis-classifies at most $ \epsilon|D| $ points after $ T = O(\log(1/\epsilon)) $ rounds and uses $ O(\frac{1-\lambda}{(1-2\lambda)^2}p^2\log(1/\epsilon)) $ bits of communication.
\end{theorem}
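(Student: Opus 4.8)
The plan is to treat Algorithm~\ref{alg:ws} as a distributed boosting procedure: machine $B$ maintains a multiplicative-weights distribution over its local sample $X_B$, machine $A$ acts as a weak learner on the reweighted data, and $h_{AB}$ is the majority vote of the $T$ weak hypotheses. I would prove the two halves of the statement separately — the accuracy bound (that $h_{AB}$ errs on at most $\epsilon|D|$ points of $D=X_A\cup X_B$) via a weight-potential argument, and the communication bound by summing the per-round message sizes over the $T=5\log_2(1/\epsilon)=O(\log(1/\epsilon))$ rounds.

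For the accuracy bound, let $W^t=\sum_{x_i\in X_B}w_i^t$ be the total weight carried by $X_B$ at the start of round $t$, so $W^1=|X_B|$. In round $t$, $A$ trains $h_A^t$ on $X_A\cup R_B$, so $h_A^t$ is correct on $R_B$ (up to the outliers contained in $R_B$); Lemma~\ref{lemma:wsample}, invoked with parameter $c$, then forces the $w^t$-weight of the $X_B$-points mis-classified by $h_A^t$ to be at most $cW^t$, and the {\sc mwu} update of Algorithm~\ref{alg:mwu} gives $W^{t+1}\le(1+\rho c)W^t$, hence $W^{T+1}\le(1+\rho c)^T|X_B|$. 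On the other hand, any $x_i\in X_B$ that $h_{AB}$ mis-classifies was mis-classified in at least $T/2$ rounds, so its final weight is at least $(1+\rho)^{T/2}$; combining the two estimates, the number of such points is at most $|X_B|\big((1+\rho c)^2/(1+\rho)\big)^{T/2}$. With $\rho=0.75$ the base $(1+\rho c)^2/(1+\rho)$ is a constant strictly below $1$ precisely over the admissible range of $c$, so $T=5\log_2(1/\epsilon)$ drives this count below $(\epsilon-\lambda)|X_B|$; the subtle point is that up to $\lambda|X_B|$ of the points of $X_B$ are adversarial and cannot be classified correctly by any admissible hypothesis, so I would set those $\le\lambda|X_B|$ outliers aside, charge them directly to the error budget, and use $\lambda<\epsilon$ to keep the slack $(\epsilon-\lambda)>0$ — it is exactly to preserve a constant weak-learning advantage on the clean part of $X_B$ after this discounting that $c$ is taken to be $0.2(1-\lambda)/(1-2\lambda)^2$ in Algorithm~\ref{alg:ws}. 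A similar but easier argument handles $X_A$: its points lie in the training set of every $h_A^t$, so (again up to their own outliers) the majority classifies them correctly; summing the two contributions bounds the number of mis-classified points of $D$ by $\epsilon|D|$.

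For the communication bound I would just add up the traffic. Each round costs one message $A\to B$ encoding the halfspace $h_A^t$ in $\mathbb{R}^p$ — $O(p)$ bits, since it is pinned down by at most $p+1$ examples — plus one message $B\to A$ encoding the sampled set $R_B$, which by Algorithm~\ref{alg:mwu} consists of $|R_B|=\min\{(p/c)\log(p/c),\,p/c^2\}$ points of $\mathbb{R}^p$ and hence costs $O(p\cdot|R_B|)$ bits; the second term dominates. Multiplying by $T=O(\log(1/\epsilon))$ and substituting the value of $c$ fixed in the initialization of Algorithm~\ref{alg:ws} yields the claimed total of $O\!\big(\tfrac{1-\lambda}{(1-2\lambda)^2}\,p^2\log(1/\epsilon)\big)$ bits (up to lower-order logarithmic factors).

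I expect the main obstacle to be the accuracy part, and within it the robust weak-learning step: making rigorous the claim that a separator trained on $X_A\cup R_B$ still controls the $w^t$-weight of the \emph{clean} mis-classified points of $X_B$ by the prescribed constant, which requires tracking how the $\lambda$-fraction of outliers propagates through the weighted-sampling step of Lemma~\ref{lemma:wsample} and through the {\sc mwu} reweighting, and then verifying that the potential argument still closes with the hard-coded constants $\rho=0.75$, $T=5\log_2(1/\epsilon)$, and $c=0.2(1-\lambda)/(1-2\lambda)^2$. Once the round count and the message sizes are nailed down, the communication accounting is routine.
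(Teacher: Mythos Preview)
Your plan is essentially the paper's own argument: a multiplicative-weights potential on $X_B$ (total weight grows by at most a factor $1+\rho c'$ per round while any point the majority mis-classifies accrues weight at least $(1+\rho)^{T/2}$, so the number of such points is at most $n\big((1+\rho c')/(1+\rho)^{1/2}\big)^{T}<\epsilon n$ with the hard-coded constants), followed by per-round bit counting for the communication bound. The only cosmetic difference is that the paper folds the outlier correction directly into a single constant $c'=c\cdot\tfrac{1-\lambda}{(1-2\lambda)^2}=0.2$ appearing in the potential growth factor, rather than setting the $\lambda$-fraction of adversarial points aside explicitly and aiming the potential at the clean part as you propose.
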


%\begin{remark}
	Comparing the communication complexity of {\sc Ws} in Theorem \ref{theo:alg-ws} with the upper bound in Corollary \ref{coro:upper-bd} (where the sample description length is $ b = O(p)$ and $ |\mathcal{H}|=2^p $) provides that {\sc Ws} reduces the communication complexity by a factor of $ O( 1/\log(1/\epsilon)\epsilon )$. This demonstrates {\sc Ws} is much more communication efficient than naively  aggregating sufficiently many samples. 
%\end{remark}
%\begin{remark}

	Consider that VC-dimension of a linear classifier in $ \mathbb{R}^{p} $ is $ (p+1) $. Applying Theorem \ref{theo:t-round} gives a communication complexity lower bound of learning linear classifiers from two machines to be $ \Omega\left(\frac{1-\epsilon}{1-\lambda} (p+1) \right) $. Omitting constant factors gives a ratio between this theoretical complexity lower bound and the practical  complexity of {\sc Ws} in Theorem \ref{theo:alg-ws} as $ O(p) $. This communication efficiency gap, linearly depending on $ p $, is significant when $ p $ is large. The factor $ p $ comes from the algorithm {\sc Mwu} sampling a  subset $ R_B $ whose size depends on $ p $. This gap also demonstrates there is room to enhance communication efficiency for {\sc Ws} algorithm, if we can construct a data subset from $ B $ whose size independent of $ p $.
%\end{remark}
This $ 2 $-machine communication protocol for distributed classifier learning can be straightforwardly generalized to $ k $-machine communication, using the \emph{coordinator communication} model as follows. Fixing an arbitrary machine (say machine $ 1 $) as the coordinator for the other $ (k-1) $ machines. Then machine $ 1 $ runs the $ 2 $-machine communication protocol from the perspective of $ A $ and the other machines serve jointly as the second machine $ B $. Each other machine reports the total weight $ w(X_i) $ of their data to machine $ 1 $, who then reports back to each machine what fraction of the total weight $ w(X_i)/w(X) $ they own. Then each machine sends the coordinator a random sample of size $ |R_B|w(D_i)/w(D) $ (for $ |R_B| $ see Alg. \ref{alg:mwu}). Party $ 1 $ learns a classifier on the union of its own sample and the joint samples from other machines, and sends the updated classifier back to all the machines.

\section{Online Robust PAC Learning}\label{sec.online}
We briefly discuss here about the space complexity for online robust PAC learning.
We use our results on communication complexity from the previous sections to derive lower bounds on \emph{space complexity}  for robust PAC learning in online setting.
The space complexity has been extensively investigated in the context of streaming methods \cite{muthukrishnan2005data}. However, specific space complexity lower bound for robust learning is still absent.

Following lemma shows the connection between distributed learning algorithm and online learning algorithm, in particular from the perspective of the connection between the communication complexity and the space complexity bound in an online data model.
%, which is a natural extension of the connection fixed-partition communication complexity and the basic data streaming model where the data is ordered \emph{adversarially}, which is stated in the following lemma.
\begin{lemma}
	\label{lemma:online-distr-equivalence}
	Suppose that we can learn a hypothesis $ h $ using an online algorithm that has $ s $ bits of working storage and makes $ r $ passes over the samples. Then there is a $ k $-machine distributed algorithm for learning $ h $ that uses $ krs $ bits of communications.
\end{lemma}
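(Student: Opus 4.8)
The plan is to prove the lemma by a direct simulation argument: the $k$ machines collectively replay the online algorithm on the concatenation of their local data, forwarding the online algorithm's working memory from one machine to the next.

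First I would fix a realized sample $S$ together with its (possibly adversarial) contents, and partition $S$ into $k$ contiguous blocks $S_1,\ldots,S_k$ in the order in which the streaming algorithm would scan them, assigning $S_i$ to machine $M_i$. This is legitimate because in the distributed model the data is handed to the machines in advance, and the $r$-pass online model scans the same fixed multiset $r$ times; whatever errors the malicious oracle inserted are already baked into $S$ and can simply be stored locally on the respective machine.

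Next I would describe the protocol. Let $\mathcal{A}$ be the online algorithm using at most $s$ bits of working storage. Machine $M_1$ runs $\mathcal{A}$ on $S_1$ from the initial state, then transmits the resulting ($\leq s$-bit) memory contents to $M_2$; $M_2$ resumes $\mathcal{A}$ from that state on $S_2$, and so on up to $M_k$. When $M_k$ finishes $S_k$, exactly one pass over $S$ has been simulated; to begin the next pass, $M_k$ sends its $\leq s$-bit state back to $M_1$, and the cycle repeats. After $r$ passes the machine holding the final state outputs the hypothesis $\mathcal{A}$ would output. By construction the simulated execution is bit-for-bit identical to running $\mathcal{A}$ on the stream $S_1 S_2 \cdots S_k$ for $r$ passes, so the output $h$ satisfies $e(h) < \epsilon$ with probability at least $1-\delta$ over the same source of randomness.

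Finally I would count the communication: each pass uses $k-1$ internal hand-offs $M_i \to M_{i+1}$, and there are $r-1$ inter-pass hand-offs $M_k \to M_1$, for a total of $r(k-1)+(r-1) = rk-1 < rk$ messages, each of length at most $s$ bits; hence the total communication is at most $(rk-1)s < krs$ bits, as claimed. The one point that requires care — really bookkeeping rather than a genuine obstacle — is checking that the noisy/adversarial oracle model transfers correctly: the adversary's choices in the online setting may depend on $\mathcal{A}$'s internal state, but those choices are determined once $S$ is realized, so storing and replaying $S$ locally faithfully reproduces the same interaction and no machine needs to re-invoke an oracle. Taking the contrapositive (with $k$ held constant) then converts any communication lower bound $\Omega(C)$ for distributed robust learning into a space lower bound $\Omega(C/r)$ for $r$-pass online robust learning, which is the form in which the lemma is used.
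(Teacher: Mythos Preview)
Your proposal is correct and follows essentially the same simulation argument as the paper: partition the sample into $k$ pieces, have the machines emulate the online algorithm on the concatenated stream by passing the $s$-bit working state along, and count $O(krs)$ bits over $r$ passes. Your version is in fact more careful than the paper's (which is a two-sentence sketch), particularly in the exact message count and in noting that the adversary's choices are fixed once $S$ is realized.
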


The above lemma allows us  to deduce the space lower bound for the online learning, given the communication complexity of distributed learning algorithms. 

\begin{theorem}[Space Complexity Lower Bound for $  r$-Pass Online Learning]
	\label{theo:online_pac}
		 Let $ \epsilon $ be the specified error rate of learned $ h $. Let $ \lambda $ be the outlier rate in the oracles with $ \lambda < \epsilon $. For a $ r $-pass online PAC learning algorithm, its space complexity is lower bounded as
	%	\begin{equation*}
		$\Omega \left(\frac{1-\epsilon}{1-\lambda} \frac{d}{r}\right)$,
	%	\end{equation*}
		where $ d $ is the VC-dimension of hypothesis $ h $.
\end{theorem}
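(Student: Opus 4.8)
The plan is to route the online setting through the distributed setting: apply the online-to-distributed reduction of Lemma~\ref{lemma:online-distr-equivalence} to an $r$-pass, $s$-space online learner, observe that the resulting distributed protocol solves the same robust PAC problem, and then read off a constraint on $s$ from the communication lower bounds already proved.

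Concretely, suppose an $r$-pass online robust PAC algorithm $\mathcal{A}$ learns $h$ to error $\epsilon$ under outlier rate $\lambda$ with constant success probability, using $s$ bits of working storage. By Lemma~\ref{lemma:online-distr-equivalence} (with $k=2$), $\mathcal{A}$ yields a $2$-machine distributed algorithm $\mathcal{B}$: split the sample stream in half, give the first half to $M_1$ and the second to $M_2$, and have the two machines circulate $\mathcal{A}$'s $s$-bit state so that the back-and-forth sweeps simulate the $r$ passes, at total cost $2rs$ bits. Since each half is merely a fresh segment of oracle calls, concatenating them reproduces a stream with outlier rate $\lambda$, so $\mathcal{B}$ inherits $\mathcal{A}$'s guarantee of learning $h$ to error $\epsilon$. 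Now invoke the communication lower bound: any distributed protocol solving this task communicates $\Omega\!\big(\tfrac{1-\epsilon}{1-\lambda}d\big)$ bits (Theorem~\ref{theo:lb_12}, and more generally $\Omega\!\big(\tfrac{1-\epsilon}{1-\lambda}dk\big)$ for $k$ machines by Theorem~\ref{theo:k-machine}). Comparing with the cost of $\mathcal{B}$ gives $2rs=\Omega\!\big(\tfrac{1-\epsilon}{1-\lambda}d\big)$, hence $s=\Omega\!\big(\tfrac{1-\epsilon}{1-\lambda}\tfrac{d}{r}\big)$; using the $k$-machine version instead, the factor $k$ cancels and one reaches the same conclusion. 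This is exactly the claimed space lower bound.

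\textbf{Main obstacle.} The delicate point is the interplay between the number of communication rounds produced by the reduction and which lower bound one is entitled to apply. The literal simulation uses $\Theta(r)$ sweeps, so a naive appeal to the $t$-round bounds (Theorems~\ref{theo:t-round} and~\ref{coro:k-machine-t-round}) with $t=\Theta(r)$ would only give an $s=\Omega(d/r^{3})$-type estimate; to obtain the stated $d/r$ scaling one must keep the simulated protocol \emph{single-sweep}. The clean way to do this is to run the reduction on $kr$ machines arranged in a line, where machines $(j-1)k+1,\dots,jk$ each hold a copy of the $k$ blocks read during pass $j$, so that one left-to-right sweep with $s$-bit messages realizes all $r$ passes using $krs$ bits in a $1$-round protocol; since only $k$ of the $kr$ blocks carry independent oracle draws and the rest are redundant copies, the applicable bound is still the $k$-block $1$-round bound $\Omega\!\big(\tfrac{1-\epsilon}{1-\lambda}dk\big)$ rather than a (false) $\Omega\!\big(\tfrac{1-\epsilon}{1-\lambda}dkr\big)$, and the arithmetic of the previous paragraph goes through. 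I would spell out this redundancy argument carefully and also verify that the constant failure probability $\delta$, the hypothesis description length, and the precise constant in the numerator ($1-\epsilon$ via Theorem~\ref{theo:lb_12} versus the $1-2\epsilon$ carried by the symmetrization-based $k$-machine bound) are all absorbed into the $\Omega(\cdot)$; these remaining points are routine.
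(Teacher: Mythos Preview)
Your core reduction---apply Lemma~\ref{lemma:online-distr-equivalence} to convert an $r$-pass, $s$-space online learner into a distributed protocol and then read off $s$ from the communication lower bounds---is exactly the paper's intended argument; the paper does not spell out a proof of Theorem~\ref{theo:online_pac} beyond pointing to Lemma~\ref{lemma:online-distr-equivalence} and the preceding bounds, and your first paragraph reconstructs that derivation faithfully.

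You have in fact gone beyond the paper by flagging the round-count issue, which the paper simply ignores. Your diagnosis is correct: the simulation in Lemma~\ref{lemma:online-distr-equivalence} yields an $r$-round protocol, so a literal appeal to Theorems~\ref{theo:t-round} or~\ref{coro:k-machine-t-round} would lose a $1/r^{2}$ factor. However, your proposed fix does not close the gap. Unrolling onto $kr$ machines with replicated blocks produces a one-sweep protocol, but to invoke any of the paper's lower bounds you must collapse those $kr$ machines back to the $k$ machines holding the distinct blocks---and that collapse turns the one-sweep protocol back into an $r$-round $k$-machine protocol, so you have only relabeled the difficulty. The ``redundancy argument'' you allude to cannot be made rigorous using the theorems as stated.

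A cleaner way out is to observe that on the hard distribution used in the proof of Theorem~\ref{theo:lb_12}, essentially all information about the target labeling $z\in\{0,1\}^d$ resides with $M_1$, and $M_2$ must output a hypothesis whose restriction to the shattered set is within Hamming distance $\tfrac{\epsilon-\lambda}{1-\lambda}d$ of $z$. A direct information argument then shows that the transcript must carry $\Omega\!\big(\tfrac{1-\epsilon}{1-\lambda}d\big)$ bits \emph{regardless of the number of rounds}; the $1/t^{2}$ loss in Theorem~\ref{theo:t-round} is an artifact of the round-elimination technique, not of the problem. With that round-independent two-party bound in hand, your first-paragraph arithmetic ($2rs=\Omega(\tfrac{1-\epsilon}{1-\lambda}d)$, hence $s=\Omega(\tfrac{1-\epsilon}{1-\lambda}\tfrac{d}{r})$) goes through cleanly.
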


For an online PAC learning algorithm, in order to learn a hypothesis with an error at most $ \epsilon $, a space at the order of $\frac{1-\epsilon}{1-\lambda} d $ must be maintained. 
However, if we allow the data to be passed to the learning algorithm for $r$ passes, then the lower bound on the space cost can be reduced by a factor of $ r $.

\section{Simulations}
In this section, we present simulation studies on the distributed learning algorithm, {\sc Ws}, in Algorithm \ref{alg:ws}, for finding linear classifier in $ \mathbb{R}^p $ for two-machine and $ k $-machine scenarios. We empirically compare it with a {\sc Naive} approach, which sends all samples from $ (k-1) $ machines to a coordinator machine $ 1 $ and then learns the classifier at the coordinator. For any dataset, this accuracy is the best possible.

For the two compared methods, a linear SVM is used as the underlying classifier. We report their training accuracies and communication costs. The  cost of communicating one sample or one linear classifier in $ \mathbb{R}^p $ is assumed to be $ (p+1) $ ($ p $ for describing the coordinates and $ 1 $ for sign or  offset). For instance, the total communication cost of the {\sc Naive} method is the number of samples sent by $ (k-1) $ machines multiplied by the description length of each sample $ (p+1) $, which is equal to $ \sum_{i=2}^k (p+1)|X_i| $. We always set the communication cost of the naive method as $ 1 $ and report the communication cost ratio of {\sc Ws}  to the {\sc Naive} method.

\begin{table}[h]
	\caption{Classification accuracy  of the two compared methods with different malicious error rates, and communication cost (CC) of {\sc Ws} relative to the Cost of {\sc Naive} method.}
	\label{tab:result}
	\centering
	%\small
	\begin{tabular}{c|c|c|c|c|c|c}
		\hline
		\multirow{2}{*}{Dataset} &\multicolumn{2}{c|}{Accuracy ($ \lambda =0.1 $)} &\multicolumn{2}{c|}{Accuracy ($ \lambda = 0.2$)}  & \multicolumn{2}{c}{Relative CC: {\sc Ws} / {\sc Naive} } \\
		\cline{2-7}
		& \sc{Naive} & \sc{Ws}  & \sc{Naive} & \sc{Ws} & $ \lambda = 0.1 $ & $ \lambda = 0.2$ \\
		\hline
		\hline
		Syn.1  & $85.63$ &$84.97$ & $75.52$ &$71.26$ &$0.42$ &$0.64$ \\
		Syn.2  & $85.23$ &$83.13$ & $75.05$ &$72.33$ &$0.43$ &$0.69$ \\
	%	\hline
		%	\hline
		Syn.3  & $84.22$ &$80.94$ & $78.52$ &$69.97$ &$0.21$ &$0.34$ \\
		Syn.4  & $83.40$ &$83.61$ & $75.05$ &$71.64$ &$0.20$ &$0.33$ \\
	%	\hline
		%	\hline
		Cancer  & $84.07$ &$83.03$ & $75.76$ &$73.32$ &$0.32$ &$0.66$ \\
		Mushroom \ & $85.26$ &$83.87$ & $77.39$ &$74.23$ &$0.19$ &$0.42$ \\
		\hline
	\end{tabular}
\end{table}

We report results for two-machine and four-machine protocols on both synthetic and real-world datasets. Four datasets, two each for two-machine and four-machine cases, are generated synthetically from mixture of two Gaussians $\mathcal{N}(\mu_i,\Sigma_i)$ with $ \mu_1 = - \mu_2 $ and $ \Sigma_1, \Sigma_2 $ two randomly generated diagonal matrices. Each Gaussian is carefully seeded to make sure the generated data from two components are separated well. Algorithms access training examples via noisy oracles $ \text{\sc Pos}^\lambda $ and $ \text{\sc Neg}^\lambda $ with a malicious error rate of $ \lambda = 0.1$ and $ 0.2 $. That is, among the training examples, $ 10\% $ or $ 20\% $ of them come from a noisy distribution $ \mathcal{N}(0,I_p) $, which is significantly different from the above two Gaussian components. In addition, we conduct empirical studies on two real-world datasets from UCI repository\footnote{\url{http://archive.ics.uci.edu/ml/}}, including the \emph{Cancer} and \emph{Mushroom} data sets. Statistics on the used data sets are given in Table \ref{tab:statistics} and corresponding results, including accuracies and \emph{relative} communication costs (regarding the cost of {\sc Naive} as $ 1 $), are shown in Table \ref{tab:result}. Observations on the results demonstrate: (1) the proposed multiplicative weighted sampling algorithm achieves prediction accuracy matching the best results provided by naive algorithm, and (2) increasing $ \lambda $ (error rate of oracles) generally brings extra communication cost to achieve comparable performance with the naive algorithm.

\section{Conclusions}
In this work, we provided several theoretic results regarding fundamental limits of the communication complexity for distributed PAC learning and space complexity for online PAC learning, {\em in the presence of malicious errors in the training examples}. We demonstrated how the complexities increase along with the malicious error rates for various distributed learning settings, from simplest two-machine one-round communication protocols to the general multi-machine multi-round communication protocols. A connection between online learning and distributed learning was presented, which gives the space complexity for various online learning protocols. We also provided a boosting flavor distributed robust linear classifier learning algorithm {\sc Ws}, which presented significantly higher communication efficiency than the naive  distributed learning algorithm with negligible classification accuracy loss.

\appendix
\section{Simulations}
More details on the used datasets for the simulation evaluation are summarized in Table \ref{tab:statistics}.
\begin{table}[h]
	\caption{Summary of the used datasets. For synthetic datasets Syn.3 and Syn.4, four machines (machines) are used. For the other four datasets, two machines (machines) are used.}
	\label{tab:statistics}
	\centering
	%\small
	\begin{tabular}{c|c|c|c}
		\hline
		Dataset & total \# examples & \# examples per machine &\# dimension $ p $\\
		%		\multirow{2}{*}{Dataset} & total \# of   & \# of examples & dimension $ p $  \\
		%		& examples & per machine &  \\
		\hline
		\hline
		Syn. 1  & $ 2 \times 10^4 $ & $ 1\times 10^4 $ &$100 $  \\
		Syn. 2  & $ 4 \times 10^4 $ & $ 2 \times 10^4 $ &$100 $  \\
		%	\hline
		%	\hline
		Syn. 3 & $ 2 \times 10^4 $ & $ 0.5 \times 10^4 $ &$100 $  \\
		Syn. 4  & $ 4 \times 10^4 $ & $ 1\times 10^4 $ &$100 $\\
		%	\hline
		%	\hline
		Cancer  &$ 683 $ &$ 342 $ &$ 10 $  \\
		Mushroom  &$ 8{,}124 $ &$ 4{,}062 $ &$ 112 $  \\
		\hline
	\end{tabular}
\end{table}

\section{ Communication Complexity Lower Bounds for Other Communication Protocols }
\subsection{Lower Bound for $ 1 $-Round  $ k $-Machine Communication}
Now we consider a slightly more general communication protocol than the $ 1 $-round $ 2 $-machine one, where $ k $ machines are involved, and only one-round communication is allowed for two neighboring machines in the sequence of communication.

We first note that the multi-machine one-round protocol is not more efficient than the two-machine protocol \cite{bar2002information}, in terms of the \emph{maximal size of message } communicated.
The \emph{total} communication complexity lower bound for  $ 1 $-round  $ k $-machine protocol is provided in \ref{sec:t-round-k-machine}.

To show this, we associate with every disjoint sample partition $ I_1,\ldots, I_k $ over $ k $ machines a disjoint sample partition for two machines: for $ j=1,\ldots,k-1 $, $ J_1^j = \bigcup_{i=1}^j I_i $ and $ J_2^j = \bigcup_{i=j+1}^k I_i $, and we have following results.
\begin{lemma}
	For every hypothesis $h: X \rightarrow \{0,1\} $, every disjoint input partition $ I_1,\ldots,I_k $ dividing $ X $ into $ \{X_{I_1},\ldots, X_{I_k}\} $, and every specified error rate $ \epsilon > 0 $,
	\begin{equation*}
	R_\epsilon^{I_1 \rightarrow \cdots \rightarrow I_k, \max}(h) \geq \max_j R^{J_1^j\rightarrow J_2^j}_\epsilon(h),
	\end{equation*}
	where the superscript $ \max $ in the communication complexity denotes the length of the longest message.
\end{lemma}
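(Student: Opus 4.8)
The plan is to prove the inequality by a \emph{cut simulation}: any $1$-round $k$-machine protocol can be converted, for each split index $j$, into a $1$-round $2$-machine protocol on the coarsened partition $(J_1^j,J_2^j)$ whose single transmitted message is literally one of the messages of the original protocol, and hence has length at most that of the longest message. Since each $R_\epsilon$ is the minimum cost over all valid protocols, exhibiting one such $2$-machine protocol already yields $R_\epsilon^{J_1^j\rightarrow J_2^j}(h)\le R_\epsilon^{I_1\rightarrow\cdots\rightarrow I_k,\max}(h)$, and taking the maximum over $j$ gives the claim. This is the same phenomenon noted in \cite{bar2002information}: in the one-round linear topology the multi-machine protocol is no cheaper than the two-machine one in terms of maximal message size.

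Concretely, I would fix an optimal $1$-round $k$-machine protocol $\Pi$ for learning $h$ with error $<\epsilon$ at the prescribed confidence, in which $M_1$ sends one message to $M_2$, then $M_2$ one message to $M_3$, and so on, with $M_k$ announcing the hypothesis; let $L=R_\epsilon^{I_1\rightarrow\cdots\rightarrow I_k,\max}(h)$ be the length of its longest message. For a fixed $j\in\{1,\ldots,k-1\}$ define a $2$-machine protocol $\Pi_j$ on $(J_1^j,J_2^j)$ as follows. Machine $A$ receives $X_{J_1^j}=\bigcup_{i\le j}X_{I_i}$ and machine $B$ receives $X_{J_2^j}=\bigcup_{i>j}X_{I_i}$; in the public-randomness model both share the random string of $\Pi$. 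Machine $A$ internally simulates $M_1,\ldots,M_j$ on their respective inputs (each intermediate message $M_i\rightarrow M_{i+1}$ with $i<j$ is computed locally), thereby obtaining the message $m_j$ that $M_j$ would send to $M_{j+1}$, and transmits $m_j$ — and nothing else — to $B$. Machine $B$ then internally simulates $M_{j+1},\ldots,M_k$ on its inputs, feeding in $m_j$ as the incoming message, and outputs the hypothesis that $M_k$ would announce.

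Because $\Pi$ is one-round and linearly ordered, the only information crossing the cut between $\{M_1,\ldots,M_j\}$ and $\{M_{j+1},\ldots,M_k\}$ is the single message $m_j$; therefore $\Pi_j$ reproduces exactly the input–output behaviour of $\Pi$ for every realization of the oracle samples and of the shared randomness. In particular $\Pi_j$ learns $h$ with error $<\epsilon$ and with the same success probability, and its communication cost is $|m_j|\le L$. Hence $R_\epsilon^{J_1^j\rightarrow J_2^j}(h)\le |m_j|\le L$ for every $j$, and taking the maximum over $j$ yields $\max_j R_\epsilon^{J_1^j\rightarrow J_2^j}(h)\le R_\epsilon^{I_1\rightarrow\cdots\rightarrow I_k,\max}(h)$.

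The only delicate points are bookkeeping: one must be careful that the shared random string is genuinely common to all simulated machines, so that $A$ and $B$ agree on the internal coins of $M_1,\ldots,M_k$; and one should observe that the argument is specific to the one-round linear topology — in a multi-round protocol several messages would straddle the cut and the right-hand side would have to become a sum rather than a single-message maximum. I expect the main conceptual step to be recognizing that the one-round structure makes the cut have ``width one'', with everything else being a routine exact simulation.
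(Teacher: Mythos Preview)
Your proof is correct and is essentially identical to the paper's own argument: the paper also has the first machine simulate $M_1,\ldots,M_j$, transmit only the message that $M_j$ would have sent to $M_{j+1}$, and has the second machine simulate $M_{j+1},\ldots,M_k$ to produce the output. Your version is simply more explicit about the shared randomness and the bound $|m_j|\le L$, but the idea and structure are the same.
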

\begin{proof}
	The two machines can simulate the $ k $-machine protocol as follows: $M_1$ simulates the first $ j $ machines; it then transmits what the $ j $-th machine would have sent to the $ (j+1) $-st machine to $M_2$; $M_2$ completes the computation by simulating the last $ (k-j) $ machines.
\end{proof}
The result also holds for the $ \mu $-distributional complexity for any input distribution $ \mu $. It therefore follows from Theorem \ref{theo:lb_12} that:

\begin{corollary}[Communication Complexity Lower Bound on Maximum Message Size for  $ 1 $-Round $ k $-Machine Protocols]
	Let $ \lambda $ be the outlier rate in the oracles. Let $ \epsilon $ be the specified error rate of $ h $.
	For every hypothesis $ h: X \rightarrow \{0,1\} $, every disjoint input partition $ I_1,\ldots,I_k $, we have
	\begin{equation*}
	R_\epsilon^{ M_1 \rightarrow \cdots \rightarrow M_k, \max}(h) = \Omega\left(\frac{1-\epsilon}{1-\lambda}  d^\prime\right),
	\end{equation*}
	for a constant success probability. Here $ d^\prime $ is the maximal VC-dimension among the hypothesis $ h_{X_{I_j}}  $ learned from $ X_{I_j} $.
\end{corollary}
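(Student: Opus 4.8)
The plan is to reduce to the one-round two-machine case already settled by Theorem~\ref{theo:lb_12}, using the machine-reduction lemma stated immediately above. Concretely, I would first invoke that lemma --- together with the remark that it also holds for the $\mu$-distributional complexity of any input distribution $\mu$, which is exactly what Yao's principle consumes --- to obtain, for any fixed disjoint partition $I_1,\ldots,I_k$,
\[
R_\epsilon^{M_1\rightarrow\cdots\rightarrow M_k,\max}(h)\ \geq\ \max_{1\le j\le k-1} R^{J_1^j\rightarrow J_2^j}_\epsilon(h).
\]
So it suffices to exhibit a single split index $j$ at which the right-hand side is $\Omega\!\big(\tfrac{1-\epsilon}{1-\lambda}\,d'\big)$.

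Next, let $j^\star$ be an index realizing the maximum in the definition of $d'$, i.e.\ $d'=\mathrm{VCdim}(h_{X_{I_{j^\star}}})$, and (in the non-degenerate case $j^\star\le k-1$) consider the split at $j=j^\star$, so that in the two-machine protocol $P^{M_1\rightarrow M_2}$ obtained from the lemma the \emph{sender} $M_1$ owns $X_{J_1^{j^\star}}\supseteq X_{I_{j^\star}}$ while $M_2$ is the machine that announces the hypothesis. I would then run the hard-instance construction underlying Theorem~\ref{theo:lb_12} entirely inside the block $X_{I_{j^\star}}$: take the target distributions of the noisy oracles to be supported on $X_{I_{j^\star}}$ and built around a set of size $d'$ shattered by $h_{X_{I_{j^\star}}}$. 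Under this restriction, learning $h$ to error $\epsilon$ coincides with learning the restricted class $h_{X_{I_{j^\star}}}$ (of VC-dimension $d'$) to error $\epsilon$, the extra blocks held by $M_1$ outside $X_{I_{j^\star}}$ are irrelevant, and Theorem~\ref{theo:lb_12} applies verbatim with $d$ replaced by $d'$, giving $R^{J_1^{j^\star}\rightarrow J_2^{j^\star}}_\epsilon(h)=\Omega\!\big(\tfrac{1-\epsilon}{1-\lambda}\,d'\big)$. Chaining this with the displayed inequality yields the claim; since the partition was arbitrary, it holds for every disjoint partition, with the $\Omega$ absorbing only the constants attached to the fixed failure probability $\delta$, exactly as in Theorem~\ref{theo:lb_12}.

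The main obstacle is the legitimacy of ``localizing'' the hard instance inside one block: one must verify that the distribution and shattered set produced by the Yao's-principle argument behind Theorem~\ref{theo:lb_12} can genuinely be placed within $X_{I_{j^\star}}$ using a $d'$-shattered set of $h_{X_{I_{j^\star}}}$, and that equipping $M_1$ with the additional, irrelevant blocks $X_{I_1},\ldots,X_{I_{j^\star-1}}$ can only increase, never decrease, the number of bits it must transmit. A secondary point requiring care is the choice of split index: the hard block must be owned by the sender $M_1$ rather than by the announcing machine, which is why one splits at $j=j^\star$ rather than $j=j^\star-1$; the borderline configuration $j^\star=k$ --- where the last, announcing machine $M_k$ already holds the hardest block and hence needs no incoming communication at all --- has to be handled separately, e.g.\ by reading $d'$ there as $\max_{1\le j\le k-1}\mathrm{VCdim}(h_{X_{I_j}})$.
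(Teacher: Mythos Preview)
Your approach is essentially the paper's: the corollary is stated immediately after the machine-reduction lemma with the single sentence ``It therefore follows from Theorem~\ref{theo:lb_12} that\ldots,'' so the intended proof is exactly the chain you wrote---invoke the lemma to pass to a two-machine split, then apply Theorem~\ref{theo:lb_12} there. Your additional care about localizing the hard distribution inside $X_{I_{j^\star}}$, choosing the split so the hard block sits on the sender's side, and the $j^\star=k$ degeneracy goes beyond what the paper spells out but is fully consistent with (and arguably needed to make rigorous) the one-line derivation the paper gives.
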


\section{Proofs of Main Results}
\label{sec:proofs}
\subsection{Tools}

The following essential theorem is a consequence of the minmax theorem in \cite{neumann1928theorie}, which states the randomized public-coin communication complexity is always lower bounded by the distribution complexity.
\begin{theorem}[Yao's principle, \cite{yao1977probabilistic}]
	\label{theo:yao}
	For every function $ f: X \times Y \rightarrow \{0,1\} $, and for every $ 0 < \epsilon < 1 $,
	\begin{equation*}
	R_\epsilon^{A\rightarrow B, \text{pub}}(h) = \max_\mu D_\epsilon^{A\rightarrow B, \mu}(h),
	\end{equation*}
	where $ \mu $ ranges over \emph{all} distributions on $ X \times Y $.
\end{theorem}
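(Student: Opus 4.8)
The plan is to recognize Theorem~\ref{theo:yao} as an instance of von Neumann's minimax theorem (equivalently, strong LP duality) applied to a suitable finite zero-sum game, following Yao's original argument. First I would fix a communication budget $c$ and let $\mathcal{D}_c$ be the set of \emph{deterministic} one-way $A\to B$ protocols of cost at most $c$; since each such protocol is described by a bounded-depth decision procedure on $X$ (and at most $c$ communicated bits), $\mathcal{D}_c$ is finite when $X,Y$ are finite. I then set up the zero-sum game in which the row player picks $\Pi\in\mathcal{D}_c$, the column player picks an input $z\in X\times Y$, and the payoff to the column player is the error indicator $\mathds{1}[\Pi \text{ is } \epsilon\text{-wrong on } z]$ (in the learning formulation, the event that the hypothesis produced by $\Pi$ on $z$ has error $\ge\epsilon$; in the plain communication setting simply $\mathds{1}[\Pi(z)\neq f(z)]$). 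The key observation is that a mixed row strategy is exactly a public-coin randomized protocol of cost $\le c$, and a mixed column strategy is exactly an input distribution $\mu$.

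With this game in hand I would carry out two steps. The \emph{easy inequality}: if a public-coin randomized protocol $\Pi$ of cost $\le c$ has worst-case error $\le\epsilon$, then for every fixed $\mu$ we have $\mathbb{E}_{z\sim\mu}\mathbb{E}_{r}\bigl[\mathds{1}[\Pi_r\text{ errs on }z]\bigr]\le\epsilon$; swapping the two expectations and averaging, some fixing $r^\star$ of the public coins gives a deterministic $\Pi_{r^\star}\in\mathcal{D}_c$ with $\Pr_{z\sim\mu}[\Pi_{r^\star}\text{ errs}]\le\epsilon$, so $D_\epsilon^{A\to B,\mu}(h)\le c$; taking $c=R_\epsilon^{A\to B,\mathrm{pub}}(h)$ and maximizing over $\mu$ yields $\max_\mu D_\epsilon^{A\to B,\mu}(h)\le R_\epsilon^{A\to B,\mathrm{pub}}(h)$. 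The \emph{reverse inequality}: apply the minimax theorem to the finite game. Its value simultaneously equals $\min_{\text{rand }\Pi,\ \mathrm{cost}\le c}\max_z\Pr_r[\Pi_r\text{ errs on }z]$ (noting that, by linearity in the column distribution, $\max_\mu$ is attained at a pure $z$, so this is precisely the worst-case error) and $\max_\mu\min_{\Pi\in\mathcal{D}_c}\Pr_{z\sim\mu}[\Pi\text{ errs}]$. Hence "$R_\epsilon^{A\to B,\mathrm{pub}}(h)\le c$" $\iff$ "the value is $\le\epsilon$" $\iff$ "for every $\mu$, $D_\epsilon^{A\to B,\mu}(h)\le c$" $\iff$ "$\max_\mu D_\epsilon^{A\to B,\mu}(h)\le c$". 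Since this equivalence holds for every integer $c$, the two complexities coincide, which is the claimed identity.

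The main obstacle is handling \emph{infinite} input domains, which is the regime of interest for learning over $X=\mathbb{R}^p$: the game is then no longer a finite matrix game, so von Neumann's theorem does not apply verbatim. I would circumvent this either by invoking a general minimax theorem (e.g.\ Sion's), using compactness of the space of input distributions in the weak topology together with the finiteness of $\mathcal{D}_c$ and bilinearity of the payoff, or by a reduction showing that only the effective behaviour of a cost-$\le c$ protocol on a sufficiently fine finite partition of the input space matters, so the supremum over $\mu$ is approached by finitely supported distributions and the finite argument applies in the limit. A secondary technical point is to be careful, in the learning version, that "error" in the game is the event $\{e(h_\Pi)\ge\epsilon\}$ so that the parameter $\epsilon$ is the same in $R_\epsilon$ and in $D_\epsilon^\mu$, and that the minimax is over the protocol's public-coin randomness, with the samples returned by the oracles absorbed into the distribution $\mu$ on inputs. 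The remaining steps are the routine bookkeeping of LP duality.
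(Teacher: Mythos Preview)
Your proposal is correct and follows exactly the route the paper indicates: the paper does not give its own proof of Theorem~\ref{theo:yao} but simply states it as a known tool, remarking that it ``is a consequence of the minmax theorem in \cite{neumann1928theorie}'' and citing \cite{yao1977probabilistic}. Your argument is the standard fleshing-out of that remark---set up the finite zero-sum game between deterministic cost-$c$ protocols and inputs, identify mixed strategies with public-coin protocols and input distributions respectively, and apply von Neumann's minimax theorem---so there is nothing to compare.
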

This important result provides us with a convenient way to lower bound the communication complexity, through finding a ``hard'' distribution $ \mu $ w.r.t.\ which the communication complexity is easy to evaluate. Throughout the paper, we usually take a product distribution, \emph{i.e.}, $ X $ and $ Y $ are independent of each other, as the hard distribution $ \mu $.

\subsection{Proof of Communication Complexity Lower Bound for  $1$-Round $2$-Machine Protocol, Theorem \ref{theo:lb_12} }
To prove the above lower bound, we need the following lemma, which says a lower bounded decoding error is inevitable when the communication channel capacity is limited. Define $ \mathrm{dist}(z,U) $ to be the minimum hamming distance between $ z \in \{0,1\}^d $ and a vector $ u \in U \subset \{0,1\}^d $.

\begin{lemma}
	\label{lemma:decoding-error}
	Suppose $ d\geq 2 $. For every set $ U \subset \{0,1\}^d $, $ |U| \leq 2^{c d}  $, where $ c = \frac{8(1-\epsilon)}{15(1-\lambda)}  $ with $ 0\leq \lambda < \epsilon \leq 1 $,
	\begin{equation*}
	\mathbb{E}\left[\mathrm{dist}(z,U)\right] \geq \frac{\epsilon-\lambda}{1-\lambda}d,
	\end{equation*}
	where the expectation is taken with respect to the uniform distribution over $ z \in \{0,1\}^d $.
\end{lemma}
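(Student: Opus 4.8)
The plan is a covering/counting argument, essentially the converse of the rate--distortion theorem for a uniform binary source: if $U\subset\{0,1\}^d$ is small, then the Hamming balls of radius $\tau$ around its points cannot cover more than an exponentially small fraction of the cube, so a uniformly random $z$ is typically at distance $\ge\tau$ from $U$, which forces the expectation up. Concretely, I would first pass from the expectation to a tail probability: for any $\tau>0$,
\[
\mathbb{E}\big[\mathrm{dist}(z,U)\big]\ \ge\ \tau\cdot\Pr_z\big[\mathrm{dist}(z,U)\ge\tau\big],
\]
so it suffices to take $\tau=\frac{\epsilon-\lambda}{1-\lambda}\,d$ and show that $\Pr_z[\mathrm{dist}(z,U)<\tau]$ is $o(1)$. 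A union bound over $u\in U$, using that every Hamming ball of radius below $\tau$ has the same cardinality, gives $\Pr_z[\mathrm{dist}(z,U)<\tau]\le |U|\cdot 2^{-d}\sum_{i\le\rho d}\binom{d}{i}$, where $\rho:=\tau/d=\frac{\epsilon-\lambda}{1-\lambda}$.

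Next I would bound the ball volume by the standard entropy estimate. In the operative regime $\epsilon<1/2$ one has $2\epsilon<1+\lambda$, hence $\rho<1/2$, so $\sum_{i\le\rho d}\binom{d}{i}\le 2^{H(\rho)d}$ with $H(\rho)=-\rho\log_2\rho-(1-\rho)\log_2(1-\rho)$; together with $|U|\le 2^{cd}$ this yields $\Pr_z[\mathrm{dist}(z,U)<\tau]\le 2^{(c+H(\rho)-1)d}$. It then remains to check that the prescribed $c=\frac{8(1-\epsilon)}{15(1-\lambda)}$ makes the exponent strictly negative: since $\frac{1-\epsilon}{1-\lambda}=1-\rho$ we have $c=\frac{8}{15}(1-\rho)$, so the needed inequality is $\frac{8}{15}(1-\rho)+H(\rho)<1$ on $\rho\in[0,1/2)$, which holds with room to spare whenever $\rho$ is small (the PAC regime of interest) and can be verified by majorizing $H$ with a tangent/quadratic bound coming from the concavity of the binary entropy. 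Given a negative exponent $c+H(\rho)-1=-\kappa<0$, we obtain $\mathbb{E}[\mathrm{dist}(z,U)]\ge(1-2^{-\kappa d})\,\rho d$, which is $\ge\frac{\epsilon-\lambda}{1-\lambda}\,d$ once the exponentially small loss is absorbed (by taking $c$ marginally smaller, or for $d$ beyond a fixed threshold).

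I expect the only delicate point to be this last calibration: one must keep the ball-volume bound and the covered-fraction bound tight enough to land \emph{exactly} at $\frac{\epsilon-\lambda}{1-\lambda}d$ rather than at $(1-o(1))$ times it, which is what forces a concrete, somewhat conservative constant in $c$ and confines the lemma to the range where $\frac{\epsilon-\lambda}{1-\lambda}$ stays bounded away from $1/2$. The remaining ingredients --- the union bound and the Hamming-ball entropy estimate --- are routine.
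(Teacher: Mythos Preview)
Your approach is the paper's: a covering argument via a union bound over Hamming balls around the points of $U$, plus a volume estimate for each ball. The paper uses the cruder bound $\sum_{i\le td}\binom{d}{i}\le (e/t)^{td}$ in place of your $2^{H(\rho)d}$, but that is cosmetic.

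The calibration issue you flag at the end is real, and your two proposed fixes do not work for the lemma as stated ($c$ is prescribed, and the only hypothesis is $d\ge2$). The paper closes the gap with one device you are missing: set the ball radius to $t d$ with $t=2\rho=\tfrac{2(\epsilon-\lambda)}{1-\lambda}$, i.e.\ \emph{twice} the target distance, and aim only to show that at most half the cube is covered, $\lvert U\rvert\cdot\lvert N_u\rvert\le 2^{d-1}$. Then at least half of all $z$ satisfy $\mathrm{dist}(z,U)>2\rho d$, whence $\mathbb{E}[\mathrm{dist}(z,U)]\ge\tfrac12\cdot 2\rho d=\rho d$ exactly, with no $(1-o(1))$ factor to absorb. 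In your framework this is simply taking $\tau=2\rho d$ in the tail inequality and proving $\Pr[\mathrm{dist}(z,U)\ge\tau]\ge\tfrac12$ rather than $1-o(1)$.

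A side remark: the inequality $\tfrac{8}{15}(1-\rho)+H(\rho)<1$ does \emph{not} hold on all of $[0,1/2)$ (try $\rho=0.4$), so do not claim it for the full range. The paper's analogous exponent inequality $c+t\log_2(e/t)\le 1-1/d$ has the same limitation, so this is a feature of the constants in the lemma rather than a defect of your route.
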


\begin{proof}
	For each $ u \in U $, let $ N_u = \{z\in \{0,1\}^d|\mathrm{dist}(z,u) \leq \frac{2(\epsilon-\lambda)}{1-\lambda}d \} $. Denote $ t = \frac{2(\epsilon-\lambda)}{1-\lambda} $, and we have $ 0\leq t < 1 $. For each $ u \in U$,
	\begin{equation*}
	|N_u| = \sum_{i=0}^{td} \binom{d}{i} \leq (ed/(td))^{td} = 2^{td\log(e/t)}.
	\end{equation*}
	Thus,
	\begin{equation*}
	\sum_u |N_u| < 2^{cd+td\log(e/t)}  \leq 2^{d-1}.
	\end{equation*}
	The last inequality is from the assumption that $ d \geq 2$.
	We show that $ \left|\bigcup_u N_u \right| \leq \sum_u \left|N_u\right| \leq 2^{d-1}$, and hence $ \mathbb{E}\left[\mathrm{dist}(z,U)\right] > \frac{\epsilon-\lambda}{1-\lambda}d $ by Markov inequality.
\end{proof}

Now we proceed to prove Theorem \ref{theo:lb_12}.
\begin{proof}
	According to Yao's theorem  (Theorem \ref{theo:yao}), $ R_\epsilon^{A\rightarrow B}(h) = \max_\mu D_\epsilon^{A \rightarrow B, \mu}(h) $, where the maximum is taken over \emph{all} distributions $ \mu $.
	
	%$ R_\epsilon^{A \rightarrow B, []}(f) $, on the other hand, is defined to be the maximum of $ D_\epsilon^{A\rightarrow B,\mu}(f) $ taken over all \emph{product} distributions $ \mu $. Clearly, $ R_\epsilon^{A\rightarrow B}(f) \geq R_\epsilon^{A\rightarrow B, \mathrm{pub}} $ and hence $ R_\epsilon^{A\rightarrow B}(f) \geq R_\epsilon^{A \rightarrow B, []}(f) $. It thus remains to prove the lower bounds on $ R_\epsilon^{A \rightarrow B, []}(f) $.
	
	In order to prove this lower bound, we describe a product distribution $ \mu = \mu_1 \cup \mu_2 $ for which $ D_\epsilon^{A\rightarrow B, \mu}(h) = \Omega(d) $, where $ d$ is the VC-dimension of $h_X $. By definition of the VC-dimension, there exists a set $ S \subseteq Y $ of size $ d $ which is shattered by $ h_X $~--~the hypothesis learned from $ X $. Namely, for every subset $ R \subseteq S $ there exits $ x_R \in X $, such that $ \forall y \in S, h_{x_R}(y)=1 $ iff $ y\in R $. For each $ R \subseteq S $, fix such an $ x_R $. Let $ \mu_1 $ be the uniform distribution over the set of pairs $ \left\{(x_R,y)| R\subseteq S, y \in S \right\} $. Let $ \mu_2 $ be the outliers.
	
	Let $ P $ be a single round deterministic protocol for learning $ h \in \mathcal{H}$ whose cost is \emph{at most} $ c = \frac{8(1-\epsilon)}{15(1-\lambda)}  d $.  Thus, $ P $ induces two mappings. $ P_1: \{0,1\}^d \rightarrow \{0,1\}^c $, determines which $ c $ bits $M_1$ should send to $M_2$ for every given $ x_R $, and $ P_2: \{0,1\}^c \rightarrow \{0,1\}^d $ determines the value of $ h $ computed by $M_2$ for every $ y \in S $, given the $ c $ bits sent by $M_1$. Combining these two mappings together, $ P $ induces a mapping $ P_{1,2} \triangleq P_1 \circ P_2 $ from $ \{0,1\}^d $  into a set $ U \subset \{0,1\}^d $, where $ |U| \leq 2^c $. The expected error of $ h $ is
	\begin{equation*}
	\epsilon(h) = (1-\lambda) \frac{1}{d2^d} \sum_{z\in\{0,1\}^d} \mathrm{dist}\left(z, P_{1,2}(z)\right) + \lambda,
	\end{equation*}
	where $ \mathrm{dist}(\cdot,\cdot) $ denotes the hamming distance between the vectors. Then Lemma \ref{lemma:decoding-error} gives us
	\begin{equation*}
	\epsilon(h) \geq (1-\lambda) \frac{1}{d2^d} \sum_{z\in\{0,1\}^d} \frac{\epsilon - \lambda}{1-\lambda} d + \lambda = \epsilon.
	\end{equation*}
	Therefore, if the communication complexity is limited to less than $ c = \frac{8(1-\epsilon)}{15(1-\lambda)}  d $, the error of the learned hypothesis will be always greater than specified error $ \epsilon $.  Thus, we get the lower bound  in Theorem \ref{theo:lb_12}.
\end{proof}

\subsection{Proof of Communication Complexity Lower Bound for  $ t $-Round $ 2 $-Machine Protocols, Theorem \ref{theo:t-round}}
%\begin{proof}
%	Suppose $ f_t $ has an $ \epsilon $-error randomized $ \langle \ell_1,\ldots, \ell_t \rangle $-protocol $ P $ for some $ t \geq 3 $. By the easy half of Yao's minimax principle \cite{yao1979some}, $ f_t $ has a deterministic $ \langle \ell_1,\ldots, \ell_t \rangle $-protocol $ P^\prime $ with error at most $ \epsilon $. Applying Lemma \ref{lemma:RE} to $ P^\prime $ repeatedly, \emph{i.e.}, $ t-2 $ times, we see that $ f_2 $ has a deterministic protocol $ Q $ with communication complexity $ c(Q) \leq \ell_1 + \cdots + \ell_t $ and 
%	\begin{eqnarray*}
%		&&\mathrm{err}(Q) \\
%		&\leq& \epsilon + \sqrt{\frac{\ell_1}{d}} + \sqrt{\frac{\ell_1 + \ell_2}{d}} + \cdots + \sqrt{\frac{\ell_1+\cdots + \ell_{t-2}}{d}} \\
%		&\leq& \epsilon + t\sqrt{\frac{\ell_1+\cdots+\ell_t}{d}}.	
%	\end{eqnarray*}
%	
%	Suppose $ c(P) \leq \frac{d \epsilon^2 }{t^2} $. Then $ \ell_1+\cdots+\ell_t \leq \frac{d \epsilon^2 }{t^2} $, so $ \mathrm{err}(Q)\leq 2\epsilon  $. This is a contradiction to Theorem \ref{theo:lb_12}. Therefore,  the communication complexity is at least  $ \frac{\frac{1-\lambda}{\epsilon-\lambda} \log \left(\frac{1-\lambda}{\epsilon-\lambda} \right) d}{2^{O(t^2)}} $.
%\end{proof}
\begin{proof}
	Suppose the hypothesis $ h $ can be learned with $ \epsilon $-error  following a $ t $-round randomized $ [t; \ell_1,\ldots, \ell_t ] $ protocol $ P $. Let $ h_t $ denote the learned hypothesis.  Applying the round elimination Lemma \ref{lemma:RE} to $ P $ repeatedly for $ (t-1) $ times, we see that $ h_1 $ has a  protocol $ Q $ with communication complexity $ c(Q) \leq  \ell_1 + \ldots + \ell_t $ and 
	\begin{eqnarray*}
		&&\mathrm{err}(Q) \\
		&\leq& \epsilon + \sqrt{\frac{\ell_1}{d}} + \sqrt{\frac{\ell_1 + \ell_2}{d}} + \cdots + \sqrt{\frac{\ell_1+\cdots + \ell_{t-1}}{d}} \\
		&\leq& \epsilon + t\sqrt{\frac{\ell_1+\cdots+\ell_t}{d}}.	
	\end{eqnarray*}
	Suppose the communication complexity of $ P $ satisfies $ c(P) \leq  \frac{8(1-\epsilon)}{15(1-\lambda)}  \frac{d}{t^2} $. Then $ \ell_1+\cdots+\ell_t \leq \frac{8(1-\epsilon)}{15(1-\lambda)}  \frac{d}{t^2} $, so $ \mathrm{err}(Q)\leq 2\epsilon  $ with communication cost less than  $\frac{8(1-\epsilon)}{15(1-\lambda)} d$. This is a contradiction to Theorem \ref{theo:lb_12}. Therefore,  the communication complexity is at least  $ \frac{8(1-\epsilon)}{15(1-\lambda)}  \frac{d}{t^2} $.
\end{proof}

%\section{Proof of Corollary \ref{coro:k-machine-t-round}}
\subsection{Proof of Communication Complexity Lower Bound for  $ 1 $-Round $ k $-Machine Protocols, Theorem \ref{theo:k-machine}}

\begin{proof}
	To extend the above lower bound for $ 2 $-machine communication to the $ k $-machine case, we use the symmetrization technique \cite{phillips2012lower} to reduce  $ k $-machine communication to $ 2 $-machine communication.
	
	The symmetrization is conducted as follows. Consider a protocol $ P $ for this $ k $-machine problem, which works on this distribution, communicates $ c(P) $ bits in expectation. We build from $ P $ a new protocol $ P^\prime $ for a $ 2 $-machine problem. In the $ 2 $-machine problem, suppose that $M_1$ gets input $ x $ and $M_2$ gets input $y $, where $ x, y \in \{0,1\}^d $ are independent random vectors. Then $ P^\prime $ works as follows: $M_1$ and $M_2$ randomly choose two distinct index $ i,j \in \{1,\ldots,k\} $ using the public randomness, and they simulate the protocol $ P $, where $M_1$ plays machine $ i $ and lets $ x_i = x $, $M_2$ plays machine $ j $ and lets $ x_j = y $, and they both play all of the rest of the machines; the inputs of the rest of the machines is chosen from shared randomness. $M_1$ and $M_2$ begin simulating the running of $ P $. Every time machine $ i $ should speak, $M_1$ sends to $M_2$ the message that machine $ i $ was supposed to communicate, and vice versa. When any other machine $ p_r $ ($ r\neq i,j $) should speak, both $M_1$ and $M_2$ know his input so they know what it should communicate, thus no communication is actually needed. A key observation is that the inputs of the $ k $ machines are uniform and independent and thus entirely symmetrical, and since the index $ i $ and $ j $ were chosen uniformly at random, then the expected communication performed by the protocol $ P^\prime $ is $ \mathbb{E}\left[c(P^\prime)\right] = 2c(P)/k $. Since $ c(P^\prime) \geq \frac{16(1-2\epsilon)}{15(1-\lambda)}  d $ (Theorem \ref{theo:lb_12}), we have $ c(P) \geq  \frac{8(1-2\epsilon)}{15(1-\lambda)} d  k $.

\end{proof}

\subsection{Proof on Equivalence between Distributed and Online Algorithms, Lemma \ref{lemma:online-distr-equivalence} }
\begin{proof}
	An $ r $-pass, $ s $-space online algorithm for learning a hypothesis $ h $ on a set $ X $ yields an $ r $-round, $ p $-machine communication protocol for learning $ h(X) $ when $ X $ is partitioned into $ p $ subsets $ S_1,\ldots, S_p $ and the $ i $th machine receives $ S_i $: the $ i $th machine randomly permutes $ S_i $ to generate stream $ s_i $ and the machines emulate the online algorithm on the concatenated stream $ \langle s_1 | s_2 | \ldots |s_p \rangle $. The emulation requires $ O(rps) $ bits of communication.
\end{proof}

\section{Proof on The {\sc Ws} Algorithm, Theorem \ref{theo:alg-ws}}
\begin{proof}
	At the start of each round $ t $, let $ \phi_t $ be the potential function given by the sum of weights of all points in that round. Initially, $ \phi_1 = \sum_{x_i \in D_B}w_i = n $ since by definition for each point $ x_i \in D_B $ we have $ w_i =1 $.
	
	Then in each round, $ A $ constructs a hypothesis $ h_A^t $ at $ B $ to correctly classify the set of points that accounts for at least $ 1-c^\prime = 1-c\frac{1-\lambda}{(1-2\lambda)^2} $ fraction of the total weight by Lemma \ref{lemma:wsample}. All other misclassified points are upweighted by $ (1+\rho) $. Hence, for round $ (t+1) $ we have $ \phi^{t+1} \leq \phi^t((1-c^\prime) + c^\prime(1+\rho )) = \phi^t(1+c^\prime \rho)=n(1+c^\prime \rho)^t $.
	
	Let us consider that weight of the points in the set $ S \subset D_B $ that have been misclassified by a majority of the $ T $ classifiers (after the protocol ends). This implies every point in $ S $ has been misclassified at least $ T/2 $ number of times and at most $ T $ number of times. So the minimum weight of points in $ S $ is $ (1+\rho)^{T/2} $ and the maximum weight is $ (1+\rho)^T $.
	
	Let $ n_i $ be the number of points in $ S $ that has weight $ (1+\rho)^i $ where $ i \in [T/2,T] $. The potential function value of $ S $ after $ T $ rounds is $ \phi_S^T = \sum_{i=T/2}^T n_i (1+\rho)^i $. Our claim is that $ \sum_{i=t}^T n_i = |S| \leq \epsilon n $. Each of these at most $ |S| $ points have a weight of at least $ (1+\rho)^{T/2} $. Hence we have 
	\begin{equation*}
	\phi_S^T = \sum_{i=T/2}^T n_i (1+\rho)^i \geq (1+\rho)^{T/2}\sum_{i=T/2}^T n_i = (1+\rho)^{T/2}|S|.
	\end{equation*}
	Relating these two inequalities we obtain the following,
	\begin{equation*}
	|S|(1+\rho)^{T/2} \leq \phi_S^T \leq \phi^T = n(1+c\rho)^T.
	\end{equation*}
	Hence (using $ T= 5\log_2(1/\epsilon)$)
	\begin{eqnarray*}
		&&|S| \leq n \left(\frac{1+c^\prime \rho}{(1+\rho)^{1/2}}\right)^T \\
		&=& n \left(\frac{1+c^\prime \rho}{(1+\rho)^{1/2}}\right)^{5\log_2(1/\epsilon)} \\
		&=& n(1/\epsilon)^{5\log_2\left(\frac{1+c^\prime \rho}{(1+\rho)^{1/2}}\right)}.
	\end{eqnarray*}
	Setting $ c^\prime = 0.2 $, which means $ c = 0.2(1-2\lambda)^2/(1-\lambda) $ and $ \rho=0.75 $, we get $ 5\log_2\left(\frac{1+c^\prime \rho}{(1+\rho)^{1/2}}\right) < -1 $ and thus $ |S|<n(1/\epsilon)^{-1} < \epsilon n $, as desired since $ \epsilon < 1 $. Thus each round uses $ O(p) $ points, each requiring $ p $ bits of communication, yielding a total communication of $ O(\frac{1-\lambda}{(1-2\lambda)^2}p^2\log(1/\epsilon))  $.
\end{proof}

\section{Upper Bound for Communication Complexity}
We first establish an upper bound on the communication complexity for distributed PAC learning, which is simply derived from  the  sample complexity result for PAC learning with malicious errors \cite{kearns1993learning}. Before stating the sample complexity result, let us recall the {\em Occam algorithm} \cite{blumer1987occam}, whose existence is a sufficient condition for the learnability of a PAC learning problem \cite{valiant1984theory}.
\begin{definition}[Occam Algorithm]
	An algorithm $ A $ is called an Occam algorithm if it draws $ m $ training examples and outputs a hypothesis $ h $ such that $ h $ is consistent with these examples.
\end{definition}
In the presence of malicious outliers (with $ \lambda $ error rate) in the training examples, we desire to develop a $ \lambda $-tolerant algorithm, defined as follows \cite{kearns1993learning}.

\begin{definition}[$ \lambda $-tolerant Algorithm]
	Given a malicious error rate $ 0 \leq \lambda\leq 1/2 $ for the oracles. If an algorithm $ A $ with access to the oracles outputs a hypothesis $ h $ with error probability $ e(h)\leq \epsilon $ with a probability of $ 1-\delta $, the algorithm $ A $ is a $ \lambda $-tolerant algorithm.
\end{definition}
Basically, a $ \lambda $-tolerant algorithm is able to output a hypothesis with bounded error even in presence of a fraction of $ \lambda $ outliers in the training examples. Similarly, we call an Occam algorithm $ A $ as a \emph{$ \lambda $-tolerant Occam algorithm} if $ A $ outputs a hypothesis $ h $ whose error $ e(h) \leq \epsilon$ over $ m $ provided training examples containing malicious outlier with a fraction of $ \lambda $.

Following theorem demonstrates that if we have a $ \lambda $-tolerant Occam algorithm,  we can always develop a $ \lambda $-tolerant  algorithm for learning any target representation with sufficiently many training examples.  
\begin{theorem}[Sample Complexity for Robust PAC Learning]
	\label{theo:cc-upper-bound}
	Suppose an algorithm $ A $ is a $ \lambda $-tolerant Occam algorithm for $ c \in \mathcal{C} $ by $ \mathcal{H} $. Then $ A $ is also a $ \lambda $-tolerant  algorithm for $ c \in \mathcal{C} $ by $ \mathcal{H} $, and the sample size required by $ A $ is $ m = O\left(\frac{(1-\lambda)}{(1-2\lambda)^2} (1/\epsilon \ln 1/\delta + 1/\epsilon \ln |\mathcal{H}| ) \right) $, for achieving an error rate of $ \epsilon $ with a success probability at least $ 1-\delta $.
\end{theorem}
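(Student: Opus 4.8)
The plan is to reduce the statement to the classical Occam's razor argument of Blumer \emph{et al.}\ \cite{blumer1987occam} and then graft onto it the malicious-error bookkeeping of Kearns and Li \cite{kearns1993learning}. Since $A$ is a $\lambda$-tolerant Occam algorithm, on any batch of $m$ noisy examples it returns some $h\in\mathcal{H}$ that is consistent with the ``benign'' part of the sample; what must be shown is that, once $m$ exceeds the claimed bound, \emph{every} hypothesis that could be output this way has true error below $\epsilon$, except with probability at most $\delta$. The natural route is a union bound: fix a ``bad'' hypothesis $h$ with $e(h)>\epsilon$, upper bound the probability that the observed noisy sample is compatible with $A$ outputting $h$, and then sum over the at most $|\mathcal{H}|$ candidates.

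First I would condition on the set of malicious draws among the $m$ oracle calls. Each call is malicious independently with probability $\lambda$, so the number of malicious examples $M\sim\mathrm{Bin}(m,\lambda)$, and a Chernoff bound gives $M\le(1+\gamma)\lambda m$ except with probability $\le\delta/3$ for an appropriate slack $\gamma$. Conditioned on which draws are benign, those $L=m-M$ examples are i.i.d.\ from the target distribution, while the $\le M$ malicious ones may be placed adversarially -- in the worst case, chosen so as to agree with exactly the bad $h$ under analysis. Hence the event that $A$ outputs $h$ forces $h$ to be consistent with \emph{all} benign examples; since one benign example is misclassified by $h$ with probability $e(h)>\epsilon$, this happens with probability at most $(1-\epsilon)^{L}\le\exp(-\epsilon L)$, and using $L\ge(1-(1+\gamma)\lambda)m$ gives a bound of the form $\exp\!\big(-\Omega\big((1-\lambda)\epsilon m\big)\big)$.

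Then I would take the union bound over $\mathcal{H}$: the probability that some bad hypothesis is consistent is at most $|\mathcal{H}|\exp(-\Omega((1-\lambda)\epsilon m))$, and combining this with the $\delta/3$ from the Chernoff step and a further $\delta/3$ for the usual generalization slack, requiring the total to be at most $\delta$ and solving for $m$ yields $m=O\!\big(\frac{1}{(1-\lambda)\epsilon}(\ln|\mathcal{H}|+\ln(1/\delta))\big)$ in this coarse ``consistency'' accounting. The sharper factor $\frac{1-\lambda}{(1-2\lambda)^2}$ in the claimed bound appears once one insists that the fluctuations in the counts of benign versus malicious examples be small relative to the effective margin $\epsilon-\lambda/(1-\lambda)$ separating the target from the bad hypotheses: the binomial variance term $\lambda(1-\lambda)$ together with the distance $(1-2\lambda)$ to the $1/2$ noise barrier are exactly what produce the Kearns--Li dependence, and I would import that calculation rather than reprove it.

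The main obstacle I anticipate is the \emph{adaptivity} of the adversary: malicious examples need not be i.i.d.\ and may depend on the algorithm's internal state, so concentration cannot be applied directly to the empirical error over the whole sample. The remedy -- and the technical heart of the proof -- is the conditioning step above: condition on \emph{which} of the $m$ draws are malicious (a genuinely i.i.d.\ event), control that count by Chernoff, and then pessimistically assign every malicious slot to whichever bad hypothesis is currently being bounded before the union bound. Making the constants line up to the precise $\frac{1-\lambda}{(1-2\lambda)^2}$ form, rather than merely $\Theta\big(1/(1-\lambda)\big)$, is the delicate part and relies on the careful Chernoff accounting of \cite{kearns1993learning}, which I would cite.
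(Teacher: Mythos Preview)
Your plan differs from the paper's proof in a substantive way, and there is a genuine gap in the logic.

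\textbf{What the paper actually does.} The paper never conditions on which draws are malicious. For a fixed bad hypothesis $h$ with true error $e^+(h)\ge\epsilon$, it simply observes that a \emph{single} noisy example agrees with $h$ with probability at most $(1-\lambda)(1-\epsilon)+\lambda=1-(1-\lambda)\epsilon$, since in the worst case the adversary makes every malicious example agree with $h$. One multiplicative Chernoff bound then controls the probability that $h$'s empirical error on the noisy sample is at most $\epsilon/2$: with mean disagreement $(1-\lambda)\epsilon$ and threshold $\epsilon/2$, the exponent is exactly $m\epsilon(1-2\lambda)^2/(8(1-\lambda))$. A union bound over $|\mathcal{H}|$ and solving for $m$ finishes the proof. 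The adaptivity issue you worry about is handled implicitly: for each fixed $h$, the agreement indicators are stochastically dominated by i.i.d.\ Bernoulli$(1-(1-\lambda)\epsilon)$ variables regardless of the adversary's strategy, so Chernoff applies.

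\textbf{Where your plan goes wrong.} First, you assume that a $\lambda$-tolerant Occam algorithm outputs $h$ consistent with the \emph{benign} part of the sample. The paper's definition does not say this; it only guarantees low empirical error on the \emph{whole} noisy sample, and the algorithm cannot identify which examples are benign. So the step ``the event that $A$ outputs $h$ forces $h$ to be consistent with all benign examples'' is unsupported. Second, your intuition about which factor is ``sharper'' is inverted: for $\lambda<1/2$ one has $\frac{1}{1-\lambda}\le\frac{1-\lambda}{(1-2\lambda)^2}$, so your ``coarse'' bound would actually be \emph{tighter} than the claimed one, not weaker. The $\frac{1-\lambda}{(1-2\lambda)^2}$ dependence is not a refinement coming from binomial variance or a margin argument; it drops out directly from the single Chernoff step above, because the gap between the noisy mean $(1-\lambda)\epsilon$ and the empirical threshold $\epsilon/2$ is $\epsilon(1-2\lambda)/2$. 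Your two-stage decomposition (Chernoff on the malicious count, then Occam on the benign part) is therefore neither necessary nor the source of the stated constant.
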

\begin{proof}
	Let $ h \in \mathcal{H}$ be such that its error rate on positive examples $ e^+(h) \geq \epsilon $. Then the probability that $ h $ agrees with a point received from the oracle $ \text{\sc Pos}^\lambda $ is bounded above by
	\begin{equation*}
	(1-\lambda)(1-\epsilon) + \lambda = 1 - (1-\lambda)\epsilon.
	\end{equation*}
	Thus the probability that $ h $ agrees with at least a fraction $ 1-\epsilon/2 $ of $ m $ examples received from  $ \text{\sc Pos}^\lambda $ is bounded above by
	\begin{equation*}
	e^{-m\epsilon (1-2\lambda)^2/8(1-\lambda)},
	\end{equation*}
	by Chernoff bound. From this it follows that the probability that some $ h\in \mathcal{H} $ with $ e^+(h)\geq\epsilon $ agrees with a fraction $ 1-\epsilon/2 $ of the $ m $ examples is at most $ |\mathcal{H}| e^{-m\epsilon (1-2\lambda)^2/8(1-\lambda)} $. Solving $ |\mathcal{H}| e^{-m\epsilon (1-2\lambda)^2/8(1-\lambda)} \leq \delta/2$, we obtain $ m \geq \frac{8}{\epsilon}\frac{(1-\lambda)}{(1-2\lambda)^2}\left(\ln |\mathcal{H}|+\ln \frac{\delta}{2}\right) $.
\end{proof}
Based on the above sample complexity result, a simplest communication protocol is to just have each machine out of $ k $ machines send a random sample of size $ O(\frac{1}{k}\frac{1-\lambda}{(1-2\lambda)^2}(\frac{1}{\epsilon}\ln\frac{1}{\delta}+\frac{1}{\epsilon} \ln |\mathcal{H}|)) $ to a specific machine, which then performs the learning, and there is just one round of communication. Therefore, we can immediately obtain the following upper bound on the communication complexity.
\begin{corollary}[Communication Complexity Upper Bound]
	\label{coro:upper-bd}
	Assume a $ \lambda $-tolerant Occam algorithm always exists.
	Any target representation  $ c \in \mathcal{C} $ can be learned from $ \mathcal{H} $ to error $ \epsilon $ using $ 1 $ round and $ O\left(\frac{(1-\lambda)}{(1-2\lambda)^2}\left(\frac{1}{\epsilon}\ln\frac{1}{\delta}+\frac{1}{\epsilon} \ln |\mathcal{H}|\right) \right)$ total examples communicated. Suppose each example is represented by $ b $ bits, then the total communication complexity is upper bounded by $ O\left(b\frac{(1-\lambda)}{(1-2\lambda)^2}\left(\frac{1}{\epsilon}\ln\frac{1}{\delta}+\frac{1}{\epsilon} \ln |\mathcal{H}|\right)\right) $.
	%In the agnostic case, we can learn to error $ opt(\mathcal{H}) + \epsilon$ using $ 1 $ round and total $ O\left(\frac{1}{\epsilon^2}\ln\frac{1}{\delta}+\frac{1}{\epsilon^2} \ln |\mathcal{H}|\right) $ examples communicated.
\end{corollary}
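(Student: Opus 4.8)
The plan is to exhibit an explicit one-round protocol whose correctness follows directly from the sample complexity guarantee of Theorem \ref{theo:cc-upper-bound}, and then simply tally its communication cost. First I would fix one machine, say $M_1$, to act as the learner, and have each of the $k$ machines draw a uniformly random subsample of its local examples and forward it in a single message to $M_1$. Let $m = O\left(\frac{(1-\lambda)}{(1-2\lambda)^2}\left(\frac{1}{\epsilon}\ln\frac{1}{\delta}+\frac{1}{\epsilon}\ln|\mathcal{H}|\right)\right)$ be the sample size prescribed by Theorem \ref{theo:cc-upper-bound}; I would have each machine contribute roughly $m/k$ of these examples, so that the examples received at $M_1$ form an aggregated multiset $S$ of total size $m$. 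The coordinator then runs the assumed $\lambda$-tolerant Occam algorithm $A$ on $S$ and outputs its hypothesis $h$.

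Next I would argue correctness. Because every machine's examples are generated by the same noisy oracles $\text{\sc Pos}^\lambda$ and $\text{\sc Neg}^\lambda$, a uniformly random subsample drawn from any machine is itself a sample from these oracles with the same outlier rate $\lambda$; hence the union $S$ is distributionally identical to $m$ fresh oracle draws, and in particular retains outlier rate $\lambda$. This is exactly the input assumed by Theorem \ref{theo:cc-upper-bound}, so applying that theorem to $A$ on $S$ guarantees that $e(h) \leq \epsilon$ holds with probability at least $1-\delta$. Since each machine communicates exactly once, the protocol uses a single round, which establishes the first claim: the number of examples communicated equals $m = O\left(\frac{(1-\lambda)}{(1-2\lambda)^2}\left(\frac{1}{\epsilon}\ln\frac{1}{\delta}+\frac{1}{\epsilon}\ln|\mathcal{H}|\right)\right)$.

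Finally, the bit-complexity bound follows by simple accounting. If each example is encoded in $b$ bits, the total number of bits transmitted is $b$ times the number of examples communicated, giving $O\left(b\frac{(1-\lambda)}{(1-2\lambda)^2}\left(\frac{1}{\epsilon}\ln\frac{1}{\delta}+\frac{1}{\epsilon}\ln|\mathcal{H}|\right)\right)$ as required. The only point where care is needed — and the main conceptual obstacle, such as it is — is verifying that pooling subsamples across machines preserves both the total size $m$ and the outlier rate $\lambda$, so that the hypotheses of Theorem \ref{theo:cc-upper-bound} are met exactly; once this is granted, the corollary is immediate and requires no further estimation.
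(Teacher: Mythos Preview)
Your proposal is correct and mirrors the paper's own argument almost exactly: the paper simply notes that each of the $k$ machines sends a random sample of size $O\!\left(\tfrac{1}{k}\tfrac{1-\lambda}{(1-2\lambda)^2}(\tfrac{1}{\epsilon}\ln\tfrac{1}{\delta}+\tfrac{1}{\epsilon}\ln|\mathcal{H}|)\right)$ to a designated machine, which then runs the learner, yielding one round and the stated total. If anything, your treatment of why the pooled sample retains outlier rate $\lambda$ is more explicit than what the paper provides.
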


\end{document}